\documentclass[nohyperref]{article}

\usepackage{microtype}
\usepackage{graphicx}
\usepackage{subfigure}
\usepackage{booktabs} %

\usepackage{hyperref}

\usepackage[accepted]{icml}

\usepackage{amsmath}
\usepackage{amssymb}
\usepackage{mathtools}
\usepackage{amsthm}
\usepackage{bm}
\usepackage{dsfont}

\usepackage[capitalize,noabbrev]{cleveref}

\theoremstyle{plain}
\newtheorem{theorem}{Theorem}[section]

\newtheorem{corollary}[theorem]{Corollary}
\theoremstyle{definition}
\newtheorem{definition}[theorem]{Definition}
\newtheorem{assumption}[theorem]{Assumption}
\theoremstyle{remark}

\newtheorem*{theorem*}{Statement}

\newcommand{\norm}[1]{\left\lVert#1\right\rVert}

\usepackage[textsize=tiny]{todonotes}

\icmltitlerunning{Convolutional and Residual Networks Provably Contain Lottery Tickets}

\begin{document}

\twocolumn[
\icmltitle{Convolutional and Residual Networks Provably Contain Lottery Tickets}

\icmlsetsymbol{equal}{*}

\begin{icmlauthorlist}
\icmlauthor{Rebekka Burkholz}{cispa}
\end{icmlauthorlist}

\icmlaffiliation{cispa}{CISPA Helmholtz Center for Information Security, Saarbr\"ucken, Germany}

\icmlcorrespondingauthor{Rebekka Burkholz}{burkholz@cispa.de}

\icmlkeywords{theory, deep learning, lottery ticket}

\vskip 0.3in
]

\printAffiliationsAndNotice{\icmlEqualContribution} %

\begin{abstract}
The Lottery Ticket Hypothesis continues to have a profound practical impact on the quest for small scale deep neural networks that solve modern deep learning tasks at competitive performance. These lottery tickets are identified by pruning large randomly initialized neural networks with architectures that are as diverse as their applications. Yet, theoretical insights that attest their existence have been mostly focused on deep fully-connected feed forward networks with ReLU activation functions. We prove that also modern architectures consisting of convolutional and residual layers that can be equipped with almost arbitrary activation functions can contain lottery tickets with high probability. 
\end{abstract}

\section{Introduction}
The Lottery ticket (LT) Hypothesis \citep{frankle2019lottery} has fueled the interest in deep neural network pruning to a reduce the number of trainable parameters with the purpose to save computational resources, regularize, and perform meaningful structure learning. 
Most newly developed algorithms are benchmarked and evaluated in the imaging domain.
Naturally, most architectures that are pruned in practice contain therefore convolutional layers.
In particular residual blocks, and skip connections in general, seem to provide iterative pruning algorithms an advantage over less computationally cumbersome approaches \citep{sanity2}.
It is subject of an ongoing debate to which degree different algorithms are successful in finding task specific computational neural network structures \citep{sanity,sanity2,plant} and whether LTs are identifiable by contemporary pruning algorithms to solve complex problems with large scale architectures \citep{rewind,rewindVsFinetune}. 
Theoretical insights into the conditions when we can expect to find LTs can provide guidance regarding when improvements could be feasible. %

Our work contributes to the discussion with the assurance that LTs likely exist under realistic conditions in convolutional networks with or without residual blocks even when pruning algorithms are currently challenged to find them.
This is in line with the Strong Lottery Ticket Hypothesis (SLTH), which has been posed by \citep{ramanujan2019whats} based on experiments in inspiration of \cite{zhou2019deconstructing}.
It suggests that a sufficiently large neural network with random parameters contains, with high probability, for each target network (of certain maximal size) a sub-network that can approximate the target network with high accuracy.
Such a sub-network is also called strong LT and does not need to be trained in order to achieve a performance that is competitive with the one of the target network.

The existence of such strong LTs has been proven for fully-connected feed forward architectures and ReLU activation functions by providing a probabilistic lower bound on the required width of the larger random network.
First proven by \cite{malach2020proving}, the width requirements have succinctly been improved to a logarithmic factor in the relevant variables \cite{pensia2020optimal,orseau2020logarithmic} and extended to nonzero biases \citep{nonzerobiases}.
The only results for convolutional architectures are provided by \citep{uniExist,cnnexist} but either apply to specific targets \cite{uniExist} or are restricted to positive inputs \citep{cnnexist} and do not cover common data transformations. 
Furthermore, all of these results rely on random networks that have at least twice the depth of the target network.
This excludes constructions, in which the LTs can have residual blocks of the same size as in the target network, and reduces the expressiveness of the target networks, as they cannot utilize a large part of the available depth for a sparser representation \citep{approx,deepOverShallow}.
To overcome these limitations, we follow a similar strategy as \citep{depthexist} to extend existence results to a larger class of activation functions and random networks that have a similar depth as the target network ($L+1$).
We solve the additional challenge to prune and construct convolutional filters with and without skip connections and residual blocks.
In doing so, we propose a different construction than \citep{cnnexist} that is not restricted to positive inputs.

\subsection{Contributions}
1) We prove the existence of strong lottery tickets in convolutional neural network architectures, potentially with skip connections and residual blocks.  2) Our constructions are not restricted to CNNs with positive inputs and ReLUs in contrast to \citep{cnnexist}.
3) Our proofs apply to a large class of activation functions, including ReLUs, Leaky ReLUs, Tanh, and Sigmoids.
4) We present two types of constructions: (a) one in which the large, randomly initialized neural network that contains LTs has at least twice the depth of a target network, i.e. $L_0 = 2L_t$; and one in which the target can leverage almost the full depth of the large network for a sparser representation, as $L_0 = L_t+1$. 
5) We verify in experiments that our theory derives realistic conditions.
Based on insights on solving subset sum approximation problems experimentally, we assess the expected sparsity of our LTs.

\subsection{Related Literature}
Most LT experiments are conducted in the context of image classification and thus rely heavily on pruning convolutional and residual neural network architectures to reduce the number of trainable parameters of a neural network \citep{braindamage,trimming,IMPfirst,frankle2019lottery,srinivas2016gendropout,orthoRepair,earlybird,rewind,rewindVsFinetune,weightcor,liu2021:finetune,weightelim,LTreg,elasticLTH,sigmoidl0,lecun1990optimal,hassibi1992second,dong2017surgeon,li2017pruneconv,molchanov2017pruneinf,validateManifold}. 
Some exceptions include graph neural networks \citep{graphLTH} and GANs \citep{chen2021dataefficient}, which still utilize convolutions.
One of the main objectives is to reduce the computational burden associated with deep learning. 
This can also be achieved with the help of core sets \citep{lessdatamore} or by starting the pruning not from a dense but a sparse random architecture \citep{evci2019rigging,plastic}. 
Pruning before training \citep{grasp,snip,snipit,synflow,ramanujan2019whats} is also a promising research direction but iterative pruning methods often perform better \citep{frankle2021review,sanity2,plant}, while most benefits seem to result from residual skip connections \citep{sanity2}. 
Another objective in the identification of LTs is structure learning, which seems to be more effective at lower sparsity levels \citep{sanity,orthoRepair} and in many cases Iterative Magnitude Prunings (IMP) \cite{IMPfirst,frankle2019lottery} can fail to find structures that perform superior to random or smaller dense networks \citep{sanity2}. 
Regardless, pruning can have provable regularization and generalization properties \citep{LTgeneralization}. 
At least for fully-connected architectures it has also been shown that structurally relevant LTs exist theoretically.

Most of the discussed pruning methods try to find weak LTs by identifying a sparse neural network architecture that is well trainable.
Strong LTs are sparse sub-networks that perform well even without training and just rely on their initial parameters \citep{zhou2019deconstructing, ramanujan2019whats} and are thus also weak LTs.
Their existence has been proven for fully-connected feed forward networks with \textsc{ReLU} activation functions by providing lower bounds on the width of the large, randomly initialized neural network that contains them \citep{malach2020proving,pensia2020optimal,orseau2020logarithmic,nonzerobiases,uniExist}.
In addition, it was shown that multiple candidate tickets exist that are also robust to parameter quantization \citep{multiprize}.
These works are restricted to ReLUs and always assume that the large randomly initialized neural networks has at least twice the depth of a target network $L_0 \geq 2L_t$. 
\citep{depthexist} extends these results to more general activation functions and also introduces a strategy to handle $L_0 \geq L_t+1$. 
Up to our knowledge, \citep{uniExist,cnnexist} are the only theoretical works on convolutional architectures and apply only to ReLU activation functions.  
While \citep{uniExist} uses convolutions to obtain specific representations of basis functions, \citep{cnnexist} studies general convolutional layers but is restricted to target networks with positive inputs, which does not cover the common image transformation procedures. 
We present more general results that also apply to potentially negative inputs, require less depth, can handle skip connections, which seem to be essential for the success of state-of-the-art pruning algorithms \citep{sanity2}, and cover a large class of activation functions.

\section{Background and notation}
Let a convolutional neural network $f: \mathcal{D} \subset \mathbb{R}^{c_0 \times d_0} \rightarrow \mathbb{R}^{c_L \times d_L}$ be defined on a compact domain $\mathcal{D}$ and have channels $\bar{c} = [c_0, c_1, ..., c_L]$, i.e., depth $L$ and width $c_l$ in layer $l \in [L] := \{0, ..., L\}$. 
It is equipped with a continuous activation function $\phi(x)$ that has Lipschitz constant $T$ on a compact domain that includes the possible inputs.
$f$ maps an input tensor $\bm{x}^{(0)}$ to neurons $x^{(l)}_{ik}$ as:
\begin{equation}\label{eq:DNN}
 \bm{x}^{(l)}_{i} = \phi\left(\bm{h}^{(l)}_{i} \right),  \ \ \  \bm{h}^{(l)}_{i} = \sum^{c_{l-1}}_{j=1} \bm{W}^{(l)}_{ij} * \bm{x}^{(l-1)}_{j} + b^{(l)}_i,
\end{equation}
where $\bm{h}^{(l)}$ is the pre-activation, $\bm{W}^{(l)} \in \mathbb{R}^{c_{l} \times c_{l-1} \times k_l}$ is the weight tensor that consists of filters (or convolutional kernels), $\bm{b}^{(l)} \in \mathbb{R}^{c_l}$ is the bias vector of layer $l$, and $*$ denotes a convolution operation.
To simplify and generalize our notation, we have flattened the filter dimension to $k_l$.
For 2d convolutions, as they are commonly in use on imaging data, the weight tensor would actually have the size $\bm{W}^{(l)} \in \mathbb{R}^{c_{l} \times c_{l-1} \times k'_{1,l} \times k'_{2,l}}$ so that $k_l =  k'_{1,l}  k'_{2,l}$.
The convolution operation between any 2-dimensional tensors $K$ and $X$ is defined as $(\bm{K} * \bm{X})_{ij} =  \sum_{i',j'} K_{i'j'} X_{(i-i'+1)(j-j'+1)}$ in this case. 
We assume that the inputs are always suitably padded with zeros and that the symbol $*$ performs the convolutions in the right dimensions.
The flattened notation just makes it easier to discuss higher dimensional filters at the same time. 
In addition to convolutional layers, we also allow for residual and more general skip connections. 
Skip connections modify the network above as
\begin{align}\label{eq:nnskip}
\bm{x}^{(l)}_{i} = \phi\left(\bm{h}^{(l)}_{i} \right) + \sum^{l-1}_{t=1} \sum^{c_{t}}_{j=1} \bm{M}^{(l,t)}_{ij} * \bm{x}^{(t)}_{j}.
\end{align}
Usually, most of the operators $\bm{M}^{(l,t)}_{ij}$ are zero.
In case of residual connections, $\bm{M}^{(l,t)}_{ij1}=1$ are one-dimensional filters that encode the identity and do not impose any additional learnable or prunable parameters. 
Without loss of generality, we assume that each parameter (weight or bias) $\theta$ is bounded by $|\theta| \leq 1-\epsilon$. 
In addition, we require that each tensor element is bounded as $|x_{iq}| \leq 1$.
Otherwise, our estimate of the error that we allow in the approximation of each target parameter would become more complicated.
Moreover, we denote with $N_{w,l}$ the number of all nonzero weight and bias parameters in Layer $l$ that do not correspond to skip connections, while $N_{m,l}$ counts the number of all nonzero parameters involved in skip connections that lead to Layer $l$. 

We distinguish three different types of neural networks: a target network $f_t$, a LT $f_{\epsilon}$, and a source network $f_0$.
The target network $f_t$ is approximated by the LT $f_{\epsilon}$, which we obtain by pruning $f_0$.
We also write $f_{\epsilon} \subset f_{0}$, meaning that $f_{\epsilon}$ is constructed by masking some parameters of $f_0$, i.e. setting some of them to $0$, while the other parameters keep their original value. 
The parameters of the source network $f_{0}$ are drawn from a random distribution as follows.
\begin{assumption}[Parameter initialization]\label{def:initconv}
We assume that the parameters of the source network $f_0$ are independently distributed as $w^{(l)}_{ij} \sim  U\left([-\sigma_l, \sigma_l]\right)$, $b^{(1)}_{i} \sim  U\left([-\sigma_l, \sigma_l]\right)$ and $b^{(l)}_{i} = 0$ for $l>1$.
\end{assumption}
Note that they could also follow any other distribution that contains a uniform distribution, for instance, a normal distribution \citep{pensia2020optimal}.
In our theorems and proofs, we choose $\sigma_l$ conveniently based on the activation functions.
For ReLUs, for instance, $\sigma_l=1$ is common. 
In practice, we usually have $\sigma_l \propto 1/\sqrt{c k}$ to avoid vanishing or exploding gradients. 
To transfer our results to this setting, we need to scale each parameter of the LT of our proofs by a scaling factor $\lambda_l$ \citep{depthexist}.
For homogeneous activation functions like ReLUs or Leaky ReLUs, these scaling factors can also be joined into a single one  $\lambda = \prod_l \lambda_l$ that is applied to the output \citep{nonzerobiases}. 
Note that if the target network parameter would not fulfill our assumption $|\theta| \leq 1$, we could simply adjust the scaling factors so that the initial parameter distribution and the network parameters vary within the same range.
As these scaling factors could still be learned or just derived based on the parameter initialization, our existence results transfer to realistic parameter initialization settings.

As the parameters of the source network are random, $f_0$ needs to be bigger than the target so that we have enough alternatives to pick the right ones. %
Because of two strategies that help us increase our options, the LT also consists of more parameters and neurons than the target network. 
These two strategies are (a) solving subset sum approximation problems and (b) using several layers to approximate a target layer.
\begin{figure*}
\includegraphics[width=\textwidth]{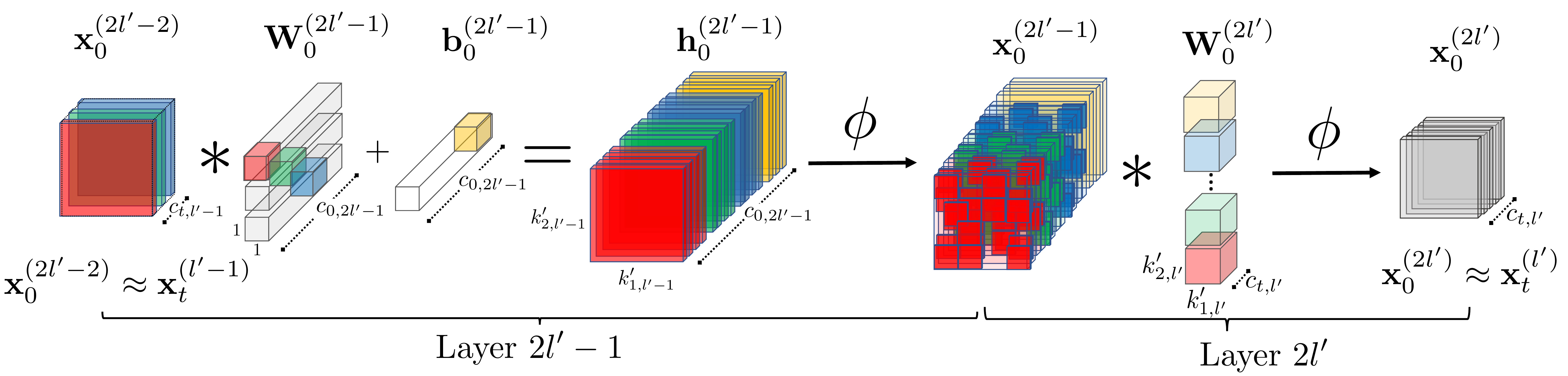}
\caption{Construction of the target Layer $l'$ in Layers $2l'-1$ and $2l'$ of the source network $f_0$.}
\label{fig:constr2l}
\end{figure*}
\paragraph{Subset Sum Approximation}
Instead of searching for a single parameter in $f_0$ that closely matches a target parameter $\theta_t$, we approximate it by the sum of multiple parameters $\theta_t \approx \sum_{n \in S} \theta_{0,n}$.
There are usually many options to represent such a sum and each possibility can be cast as a subset of a larger base set of size $m$, which contains $2^m$ candidate subsets.  
This explains why solving subset sum approximation problems to find a suitable subset is successful with high probability based on relatively small base sizes $m$.
We frequently use a theorem by \cite{subsetsum}, which has been extended by \citep{uniExist} to solve subset sum approximation problems if the random variables are not necessarily identically distributed.
For convenience it is stated as Cor.~\ref{thm:subsetsumExtended} in more general form in the appendix.
We primarily utilize the following simplification in our construction. 
\begin{corollary}[Subset sum approximation \citep{subsetsum}]\label{thm:subsetsum}
Let $X_1, ..., X_{m}$  be independent, uniformly distributed random variables with $X_k \sim U[-1,1]$ or $X_k \sim U[-1,1] U[-1,1]$ and $\epsilon, \delta \in (0,1)$ be given. 
Then for any $\theta_t \in [-1,1]$ there exists a subset $S \subset [m]$ so that with probability at least $1-\delta$ we have $|\theta_t - \sum_{k \in S} X_k| \leq \epsilon$ if $m \geq C \log\left(\frac{1}{\min\left(\delta, \epsilon\right)} \right)$.
\end{corollary}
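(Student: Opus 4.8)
The plan is to obtain Corollary~\ref{thm:subsetsum} as the independent and identically distributed specialization of the more general Corollary~\ref{thm:subsetsumExtended}, so that only the hypotheses on the admissible laws need checking. First I would record that both $U[-1,1]$ and the product law $U[-1,1]\,U[-1,1]$ are symmetric about $0$, supported on $[-1,1]$, and have densities that are strictly positive on $(-1,1)$; for the product law the density equals $-\tfrac12\ln|y|$, which has only an integrable logarithmic singularity at $0$ and is finite and positive elsewhere. Using symmetry, I would reduce to targets $\theta_t\in[0,1]$, since the reachable set $\{\sum_{k\in S}X_k : S\subseteq[m]\}$ has the same distribution as its reflection.

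Next I would sketch the underlying combinatorial probability bound of Lueker that produces the logarithmic width $m\ge C\log(1/\min(\delta,\epsilon))$. Writing $V_k=\{\sum_{i\in S}X_i : S\subseteq[k]\}$ for the set of sums reachable from the first $k$ variables, so that $V_{k+1}=V_k\cup(V_k+X_{k+1})$, I would track $g_k$, the largest gap left uncovered inside the working interval, i.e.\ the supremum over $z$ in that interval of the distance from $z$ to the nearest point of $V_k$. The heart of the argument is a one-step contraction: conditioned on $V_k$, the fresh independent variable $X_{k+1}$ overlays a shifted copy $V_k+X_{k+1}$ on top of $V_k$, and because $X_{k+1}$ has a density bounded below on a neighborhood of the origin, this shift lands near the midpoint of any given gap with at least constant probability. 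I would formalize this as an expected contraction $\mathbb{E}[g_{k+1}\mid \mathcal{F}_k]\le \rho\, g_k$ for a fixed $\rho<1$, so that the gaps shrink geometrically.

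I would then iterate: after $O(\log(1/\epsilon))$ successful refinements the maximal gap falls below $\epsilon$, the resolution we need. To upgrade the expected contraction to a high-probability statement I would group the variables into blocks, treat each block as a refinement trial succeeding with constant probability, and apply Chernoff and union bounds over a grid of targets; the probability that too few refinements succeed drops below $\delta$ once the number of blocks is $\Omega(\log(1/\delta))$. Combining the two requirements yields $m\ge C\log(1/\min(\delta,\epsilon))$, and controlling the maximal gap bounds the approximation error uniformly over all $\theta_t\in[0,1]$, hence by symmetry over all of $[-1,1]$.

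I expect the main obstacle to be making the one-step contraction rigorous: one must quantify how a random shift of the reachable set by $X_{k+1}$ subdivides existing gaps, control boundary effects where $V_k+X_{k+1}$ protrudes outside the working interval, and check that the contraction constant $\rho$ is uniform in $k$ for both admissible laws. The product law $U[-1,1]\,U[-1,1]$ is the delicate case because its density is unbounded near $0$; fortunately the argument only needs a lower bound on the density on a fixed interval around $0$, which the logarithmic density supplies, so the same $\rho$ applies.
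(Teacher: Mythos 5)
Your first paragraph is exactly the paper's route: Corollary~\ref{thm:subsetsum} is the specialization of Cor.~\ref{thm:subsetsumExtended} with $t=B=1$, obtained by checking that both laws contain a uniform distribution in the sense of Definition~\ref{def:containUniform} and absorbing the resulting constants ($h$, $\alpha$) into $C$; for the product law the density $-\tfrac{1}{2}\ln|y|$ is bounded below by $-\tfrac{1}{2}\ln h>0$ on $[-h,h]$ for any $h<1$, so one may take, e.g., $h=\alpha=1/e$. The paper offers no proof beyond this specialization and the citations to \citep{subsetsum} and \citep{uniExist}, so up to this point you are done and in agreement with it.

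The additional sketch of Lueker's theorem itself, however, has a genuine gap at its central step. For a particular uncovered gap $(a,b)$ of length $g$ with $a\in V_k$, the density lower bound near the origin only gives $\Pr\bigl(a+X_{k+1}\in[\,a+g/4,\,a+3g/4\,]\bigr)=O(g)$, not a constant; hence the claimed one-step contraction $\mathbb{E}[g_{k+1}\mid\mathcal{F}_k]\le\rho\,g_k$ with a fixed $\rho<1$ does not follow, and without it the maximal gap would only decay polynomially, giving $m\gtrsim 1/\epsilon$ rather than $\log(1/\epsilon)$. To get a constant per-step success probability one must exploit that the single shift $X_{k+1}$ translates the entire (exponentially large) set $V_k$ at once, but arguing that some translate of $V_k$ lands inside a given gap presupposes that $V_k$ is already dense---which is what is being proved. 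Lueker's actual argument therefore tracks not the maximal gap but the Lebesgue measure of the set of $\epsilon$-approximable targets, and obtains geometric decay through a supermartingale/potential-function recursion for that measure. Since the corollary is attributed to \citep{subsetsum} and used as a black box, this does not cast doubt on the statement, but your sketch as written would not close into a proof of the underlying theorem.
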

Naturally, a question that decides about the practicality of using this result is the size of the constant $C$. 
Repeated solutions of subset sum problems by optimal exhaustive searches over subsets for different base set sizes $m$ with $X_k\sim U[-1,1]$ suggest that $C \approx 3$, which means that our approach is feasible.
More insights on subset sum problems are discussed in the experiments section.

(b) The second strategy to construct LTs is concerned with making base sets $\{X_1, ..., X_{m}\}$ available to solve subset sum approximation problems.
A common approach is to create multiple versions of the input to a layer, which usually populate an additional intermediary layer in $f_0$ and $f_{\epsilon}$ before approximate target neurons are constructed in the following layer.
This results in source networks and LTs that need twice the depth of a target network, i.e, $L_0 = 2 L_t$.
\cite{cnnexist} has transferred this idea for fully-connected feed forward architectures to convolutional layers with ReLU activation functions.
However, the first layer has multi-dimensional input in general, which limits the approach to positive inputs.
Our first contribution is to derive a different ($L_0 = 2L_t$)-construction that can handle any input and works for multiple activation functions, including ReLUs. 
Afterwards, we transfer the $L_t+1$ construction idea for fully-connected feed forward neural networks \citep{depthexist} and general activation functions to convolutional layers.
An important consequence of this construction is that we can also cover residual blocks that are of the same size in all three networks, the target, the source, and the LT.

\paragraph{Activation Functions}
The ($L_0 = 2L_t$)-construction in fully-connected networks relies on the fact that ReLUs $\phi_R(x) = \max\{x,0\}$ can easily represent the identity as $x = \phi_R(x)-\phi_R(-x)$ so that neurons in the intermediary layer correspond to the positive $\phi(x^{(l-1)}_{i})$ and the negative part $\phi(-x^{(l-1)}_{i})$ of input neurons $x^{(l-1)}_{i}$. 
Similarly, Leaky ReLUs $\phi_{LR}(x) = \phi_R(x) - \alpha \phi_R(-x)$ encode the identity as $x = (\phi_{LR}(x)-\phi_{LR}(-x))/(1+\alpha)$.
Most other activation functions can be approximated locally around the origin by a shifted Leaky ReLU and thus fulfill our following assumption.
\begin{assumption}[Activation function (first layer)]\label{def:act}
For any given $\epsilon' > 0$ exists a neighborhood $[-a(\epsilon'),a(\epsilon')]$ of $0$ with $a(\epsilon') >0$ so that the activation function $\phi$ can be approximated by $\widehat{\phi}(x)$ on that neighborhood such that $\sup_{x \in {[-a,a]}} | \phi(x) - \widehat{\phi}(x)| \leq \epsilon'$, where $\widehat{\phi}(x) = m_{+} x + d$ for $x \geq 0$ and $\widehat{\phi}(x) = m_{-} x + d$ for $x < 0$ with $m_{+}, m_{-}, d \in \mathbb{R}$ and $m_{+} + m_{-} \neq 0$. 
We further assume that $g(x) = x/a(x)$ is invertible on an interval $]0, \epsilon'']$ with $\epsilon''>0$.
\end{assumption}
For instance, ReLUs $\phi(x)=\max(x,0)$ inflict zero error on $\mathbb{R}$ (i.e., $a = \infty$) with $m_{+} = 1$, $m_{-} = 0$, and $d=0$.
Leaky ReLUs can be represented without error with $m_{+} = 1$, $m_{-} = \alpha$, and $d=0$ for an $\alpha > 0$.
$\phi(x)=\tanh(x)$ is approximately linear so that $|\tanh(x)-x| \leq x^3/3$ for $|x| < \pi/2$, which can be seen by Taylor expansion of $\tanh$. 
This implies that the choice $m_{+} = 1$, $m_{-} = 1$, and $d=0$ with $a = \min\{ (3 \epsilon')^{1/3}, \pi/2\}$ fulfills our assumption.
Sigmoids $\phi(x) = 1/(1+\exp(-x))$ can be analyzed in the same way with $m_{+} = m_{-} = 0.25$, $d=0.5$, and $a = \min\{ (48 \epsilon')^{1/3}, \pi\}$, since $\phi(x) = (\tanh(x/2)+1)/2$. 

All these activation functions can approximate the identity as $\left|x - r \left(\phi(x)-\phi(-x)\right)\right| \leq 2 r \epsilon'$, where we have defined $r := \frac{1}{m_{+} + m_{-}}$.
To abbreviate our notation later on, we also use $\mu_{\pm}(x) := m_{+}$ for $x > 0$, $\mu_{\pm}(x) := m_{-}$ for $x < 0$, and $\mu_{\pm}(0) := 0$ for $x=0$. 
Note that we always have $\mu_{\pm}(x) + \mu_{\pm}(-x) = m_{+} + m_{-} = 1/r$. 
Functions with $m_{+} = m_{-} = m$ and $d=0$ like \textsc{tanh} can also be approximated by $|x - \phi(x)/m| \leq \epsilon'/m$ and do not need separate approximations of the positive and the negative part.
The ability to easily approximate the identity is a valuable property to construct target networks that fit the depth of the source network.
We can always increase the depth of a target by concatenating identity approximating layers.
Note that we otherwise only need this assumption in the middle layers of the ($2L_t$)-construction and the first layer of the ($L_t+1$)-construction. 
Any other activation functions do not even need to fulfill this assumption.

\section{Existence Results}
The main idea in our construction of LTs rests on the linearity of convolutions.
Concretely, let us assume for a moment that the input tensors in Eq.~(\ref{eq:DNN}) of the LT are identical to a target input up to a scalar factor $\lambda_j$, i.e., $\bm{x}^{(l-1)}_{j'} = \lambda_{j'} \bm{x}^{(l-1)}_{t,j}$ for all $j' \in I_j$.
It follows that $\sum_{j'\in I_j} \bm{W}^{(l)}_{ij'} * \bm{x}^{(l-1)}_{j'} =  \sum_{j'\in I_j} \bm{W}^{(l)}_{ij'} * (\lambda_{j'} \bm{x}^{(l-1)}_{t,j}) = \left(\sum_{j' \in I_j} \bm{W}^{(l)}_{ij'} \lambda_{j'} \right)* \bm{x}^{(l-1)}_{t,j}$. 
We could therefore use $\left(\sum_{j' \in I_j} \bm{W}^{(l)}_{ij'} \lambda_{j'} \right)$ in the LT to approximate a target tensor $\bm{W}^{(l)}_{t,ij}$ by masking some of the components $w^{(l)}_{ij'q}$ according to subset sum approximation.
In fact, all entries for the indices $j'q$ can be used independently to approximate the corresponding target tensor entry with index $q$. Hence, our main task is to create multiple candidates $\lambda_{j'} \bm{x}^{(l-1)}_{t,j}$ for an input $\bm{x}^{(l-1)}_{t,j}$.
Our two different construction approaches differ in how they achieve this.

\subsection{Two Layers for One}
Informally, our first objective is to show that for any convolutional and/or residual target network $f_t$ with depth $L_t$, maximum channel size $c_t$, and kernel size $k_t$, there exists with probability $1-\delta$ a sub-network $f_{\epsilon}$ of a source network with depth $L_0=2L_t$ that approximates the target up to error $\epsilon > 0$ if the source network has maximum channel size $c_0 \geq C c_t s_0 \log(c_t k_t L_t /\min\{\epsilon, \delta\}$ for a constant $C$ that is independent of $\epsilon$, $\delta$, $c_t$, $k_t$, and the stride $s_0$ of filters in uneven layers of $f_0$.

Why do the source network and the LT have twice the depth?
Fig.~\ref{fig:constr2l} visualizes the answer. 
Every neuron in an even layer $l=2l'$ of the LT approximates the corresponding neuron in layer $l'$ of the target so that  $\bm{x}^{2l'}_{0,j} \approx \bm{x}^{l'}_{t,j}$ and $c_{0,2l'} = c_{t,l'}$ for every $l' \in [L_t]$.
What happens in the uneven layers $l=2l'+1$?
We create multiple versions of the input neurons $\bm{x}^{(2(l'-1))}_{0,j}$ to support our subset sum approximation problems and we achieve this with the help of univariate filters.
Note that the filters of $f_0$ do not need to be univariate themselves.
We can also prune them into this state by setting all filter entries except for one to zero.
Which filter entry we keep as nonzero depends on the stride.
For simplicity, let us assume here that we have univariate filters $w^{(2l'-1)}_{i'j1}$ with stride $s_0 = 1$ already.
How to transfer other cases to this setting is discussed in the appendix.

We have $w^{(2l'-1)}_{i'j1} = \lambda_{i'j}$ for every $i' \in I_j$ and $w^{(2l'-1)}_{i'j1} = 0$ otherwise.
This creates preactivations $\bm{h}^{(2l'-1)}_{i'}  = \sum_{j'} \bm{W}^{(2l'-1)}_{i'j'} * \bm{x}^{(2l'-2)}_{0,j'}  = \lambda_{i'j}  \bm{x}^{(2l'-2)}_{0,j}$, which is exactly what we were looking for.
Yet, we still have to send each entry through an activation function receiving $\phi(\lambda_{i'j}  \bm{x}^{(2l'-2)}_{0,j})$. 
To handle input entries with different signs, we have to create neurons with positive $(\lambda_{i'j} > 0)$ and negative ($\lambda_{i'j} < 0$).  
For $|\lambda_{i'j}|$ small enough, we can then approximate $\phi$ as in Assumption~\ref{def:act}, use this to construct the identity, and exchange the order of the summation and convolution because of the local linearity of $\phi$. 
We receive $\sum_{i' \in I_j} \bm{W}^{(2l')}_{0,ii'}  * \phi(\lambda_{i'j}  \bm{x}^{(2l'-2)}_{0,j})$ $\approx  \left(\sum_{i' \in I_j, \lambda_{i'j} > 0} \bm{W}^{(2l')}_{0,ii'} \lambda_{i'j} \right) \left(\mu_{\pm}(\bm{x}^{(2l'-2)}_{0,j}) \bm{x}^{(2l'-2)}_{0,j} \right) + \left(\sum_{i' \in I_j, \lambda_{i'j} < 0} \bm{W}^{(2l')}_{0,ii'} \lambda_{i'j} \right) *   \left(\mu_{\pm}(-\bm{x}^{(2l'-2)}_{0,j}) \bm{x}^{(2l'-2)}_{0,j} \right)$ $\approx \bm{W}^{(l')}_{t,ij} *\left[r \left(\mu_{\pm}(-\bm{x}^{(2l'-2)}_{0,j}) + \mu_{\pm}(-\bm{x}^{(2l'-2)}_{0,j}\right) \bm{x}^{(2l'-2)}_{0,j}\right]$ $\approx \bm{W}^{(l')}_{t,ij} * \bm{x}^{(l'-1)}_{t,j}$,
if we can approximate $r \sum_{i',  \lambda_{i'j} > 0} w^{(2l')}_{0,ii'q} \lambda_{i'j} \approx  w^{(l')}_{t,ijq}$ and $r \sum_{i',  \lambda_{i'j} < 0} w^{(2l')}_{0,ii'q} \lambda_{i'j} \approx  w^{(l')}_{t,ijq}$ for all filter entries $q$ by appropriate masking of the tensor elements $w^{(l')}_{t,ijq}$.
In addition, we have used that $r \left(\mu_{\pm}(-\bm{x}^{(2l'-2)}_{0,j}) + \mu_{\pm}(-\bm{x}^{(2l'-2)}_{0,j}\right) = 1$ by definition and that we have $\bm{x}^{(2l'-2)}_{0,j} \approx \bm{x}^{(l'-1)}_{t,j} $ by construction. 
Biases can be obtained similarly by approximating $\sum_{i',q} w^{(2l')}_{0,ii'} \mu_{\pm}(\bm{b}^{(2l'-1)}_{0,i'})\bm{b}^{(2l'-1)}_{0,i'} \approx \bm{b}^{(l')}_{t,i}$.

\begin{theorem}[LT existence ($2L_t$-construction)]\label{thm:LTexist2L}
Assume that ${\epsilon, \delta \in {(0,1)}}$, a convolutional target network (without skip connections) $f_t(x): \mathcal{D} \subset \mathbb{R}^{c_0 \times d_0} \rightarrow \mathbb{R}^{c_L \times d_{L_t}}$ with architecture $\bar{c}_t$ of depth $L_t$ with $N_{t,l}$ nonzero parameters in Layer $l$, and a source network $f_0$ with architecture $\bar{n}_0$ of depth $L_0 = 2 L_t$ are given.
Let $\phi$ be the activation function of $f_t$ with Lipschitz constant $T$ fulfilling Assumption~\ref{def:act} with $d=0$. 
Then, with probability at least $1-\delta$, $f_{0}$ contains a subnetwork $f_{\epsilon} \subset f_{0}$ so that each output component $i$ is approximated as $\max_{\bm{x}\in \mathcal{D}} \left|f_{t,iq}(\bm{x})- f_{\epsilon',iq}(\bm{x}) \right| \leq \epsilon$, %
if for all $l' \in [L_t]$ we have 
\begin{align*}
  c_{0,2l'+1} \geq  C c_{t,l} \log\left(\frac{N_t }{\min\{\epsilon/\prod^{L_t}_{s=l}(3TN_{t,s}), \delta \} }\right), %
\end{align*}
and $n_{0,2l'} \geq n_{t,l'}+1$, and if the parameters of $f_{0}$ are initialized according to Assumption~\ref{def:initconv} with $\sigma_{2l'+1} = r/\sigma_{2l'} $ and $\sigma_{2l'} = a(\epsilon'')/2$ and $\epsilon'' = g^{-1} \left(\frac{\epsilon'}{C  N_t \log\left(\frac{N_t }{\min\{\epsilon/\prod^{L_t}_{s=l}(3TN_{t,s}), \delta \} }\right) } \right)$ for $g(\epsilon'') = \epsilon''/(a(\epsilon''))$.
\end{theorem}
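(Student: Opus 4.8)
The plan is to argue by induction over the target layers $l'$, maintaining the invariant that the even source layers track the target, $\bm{x}^{(2l')}_{0,j} \approx \bm{x}^{(l')}_{t,j}$, with a channel-wise error that I control explicitly. The base case is the shared input $\bm{x}^{(0)}_{0,j} = \bm{x}^{(0)}_{t,j}$. Each inductive step realizes one target layer $l'$ by the two-layer block $(2l'-1, 2l')$ from the sketch preceding the statement: an odd \emph{copy layer} that manufactures a base set of scaled versions of the current (approximate) target activations, and an even \emph{assembly layer} that recombines them into an approximation of the target convolution $\bm{W}^{(l')}_{t,ij} * \bm{x}^{(l'-1)}_{t,j}$ together with the target bias.

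For the copy layer I would prune every filter of $f_0$ to a univariate filter by masking all but the one entry selected according to the stride $s_0$ (the reduction of general strides to $s_0 = 1$ deferred as in the sketch), so that the surviving weight equals a fresh draw $\lambda_{i'j}$ and the preactivation is exactly $\lambda_{i'j}\,\bm{x}^{(2l'-2)}_{0,j}$. Initializing this layer at the small scale $a(\epsilon'')/2$ keeps every argument $\lambda_{i'j} x$ inside $[-a(\epsilon''), a(\epsilon'')]$ (using $|x_{iq}| \le 1$), where Assumption~\ref{def:act} licenses the replacement $\phi(\lambda_{i'j} x) = \mu_{\pm}(\lambda_{i'j} x)\,\lambda_{i'j} x$ up to an additive error $\epsilon'$ (recall $d = 0$ here); splitting the candidates by the sign of $\lambda_{i'j}$ furnishes, per target input channel, one base set valid for $x > 0$ and one for $x < 0$. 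For the assembly layer, exchanging sum and convolution through this local linearity reduces the task to the subset-sum requirement $r\sum_{i':\lambda_{i'j}>0} w^{(2l')}_{0,ii'q}\lambda_{i'j} \approx w^{(l')}_{t,ijq}$ and its negative-$\lambda$ analogue (the bias handled identically). Each summand is a product of two independent uniforms, hence distributed as $U[-1,1]\,U[-1,1]$ after normalization, and the complementary scale $r/\sigma$ from the statement is exactly what places the rescaled target in $[-1,1]$; Corollary~\ref{thm:subsetsum} in its product form then applies, with masking of $w^{(2l')}_{0,ii'q}$ realizing the chosen subset. A base set of width $m = c_{0,2l'+1}$ suffices whenever $m \ge C\log(1/\min\{\delta', \epsilon_{ss}\})$.

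It remains to propagate the per-block errors and to union-bound the failure events. Three contributions enter each block: the linearization error $\epsilon'$ of Assumption~\ref{def:act}, the per-entry subset-sum error $\epsilon_{ss}$ of Corollary~\ref{thm:subsetsum}, and the amplification of the incoming activation error by the Lipschitz constant $T$ and by the at most $N_{t,l'}$ terms summed inside the target convolution. Compounding these across the remaining $L_t - l' + 1$ layers gives a multiplicative blow-up controlled by $\prod_{s=l'}^{L_t} 3TN_{t,s}$, so requesting per-entry accuracy $\epsilon_{ss} \le \epsilon / \prod_{s=l'}^{L_t} 3TN_{t,s}$ and fixing $\epsilon'$ through the implicit choice $\epsilon'' = g^{-1}(\cdots)$ of the statement drives the final error below $\epsilon$. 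A union bound over all $N_t$ subset-sum problems at level $\delta' = \delta/N_t$ upgrades the per-problem guarantee to the global probability $1-\delta$ and, fed back into $m \ge C\log(1/\min\{\delta', \epsilon_{ss}\})$, yields exactly the stated width requirement with its $\log(N_t/\min\{\epsilon/\prod 3TN_{t,s}, \delta\})$ factor.

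The step I expect to be most delicate is the joint calibration of the two accuracy parameters. The linear approximation of Assumption~\ref{def:act} only holds on the shrinking window $[-a(\epsilon''), a(\epsilon'')]$, so decreasing the admissible linearization error shrinks $a(\epsilon'')$ and hence the copy-layer scale, which in turn inflates the assembly-layer scale through $r/\sigma$ and risks pushing the normalized products or the rescaled target out of the range demanded by Corollary~\ref{thm:subsetsum}. Verifying that the implicit definition of $\epsilon''$ via the invertibility of $g(x) = x/a(x)$ --- precisely the hypothesis added in Assumption~\ref{def:act} --- resolves this coupling, keeping both the linearization contribution and the compounded propagation error simultaneously below $\epsilon$ at every layer of the induction, is the real work; the subset-sum selection and the union bound are then routine given Corollary~\ref{thm:subsetsum}.
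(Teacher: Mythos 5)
Your proposal follows essentially the same route as the paper's proof: a layerwise two-for-one block (the paper's Corollary~\ref{cor:twice}) with univariate pruning in the odd layer, sign-split base sets, subset-sum assembly over products of uniforms in the even layer, the same three-way error decomposition (linearization, subset-sum, propagated input error) each controlled via the $g^{-1}$ calibration of $\epsilon''$, and the same $\delta/N_t$ union bound and $\prod_s 3TN_{t,s}$ error compounding. The only detail you gloss over is that the bias approximation is not quite ``identical'': it requires reserving an extra approximately-constant input channel (the source of the $+1$ in $n_{0,2l'} \geq n_{t,l'}+1$), and in the first layer, where no constant channel exists among the data inputs, the paper instead keeps the random biases $b_{0,s}$ of $f_0$ with fully zeroed weights.
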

The proof is provided in the appendix.
In summary, two main insights enable the success of this construction and allow for generally positive and negative inputs in contrast to \citep{cnnexist}.
First, we convolute the input channels with an univariate filter in the first layer (and not in the second as \citep{cnnexist}).
Second, the insight that we can then prune the entries of each filter in the second layer independently, which follows from the linearity of convolutions, makes the construction parameter efficient and flexible.

We could derive a more advantageous scaling of the error if we would additionally assume that $\norm{W^{(l)}_t}_2 \leq 1$.  
Note also that the error does not depend on the input tensor dimension, i.e., the image size. 
This would change if we also incorporated a common flattening operation and fully-connected layers, which are often used to solve classification problems in the end. 
In this case, the image dimension would determine the number of input features to the fully-connected layers.
As these are handled in a different work \citep{depthexist}, we skip a deeper discussion. 
We would only need to adapt the initial $\epsilon$ to combine them. 

We only state the theorem for activation functions $\phi(0) = d = 0$ here. 
For $d \neq 0$, we can derive similar results if we change the initialization scheme to 'looks-linear' initialization \cite{dyniso} to control the error when we approximate $\phi$, see also \cite{depthexist}.

\subsection{($L_0=L_t+1$)-Construction}
The $(2L_t)$-construction uses an additional layer to create multiple versions of the input for each target layer. 
As the neuron states are sent through the non-linear activation function, we usually need to create two neurons, the analog to the positive and the analog to the negative part of each input channel.
Yet, this extra effort would not be necessary if we had immediately multiple versions of each input channel available.
As we cannot change the given input by the data, to create multiple versions initially, we have to employ the two-for-one layer construction to approximate the first target layer.
If we directly construct the $c_{t,1}$ output channels multiple times, we can however drop the next intermediary layer completely and repeat constructing the channels of the next layer so many times that they serve the solution of subset sum approximation problems in the following layers. 
Fig.~\ref{fig:lp1l} explains the main idea and the next theorem states formally the existence result for this construction.
\begin{figure}
\includegraphics[width=0.49\textwidth]{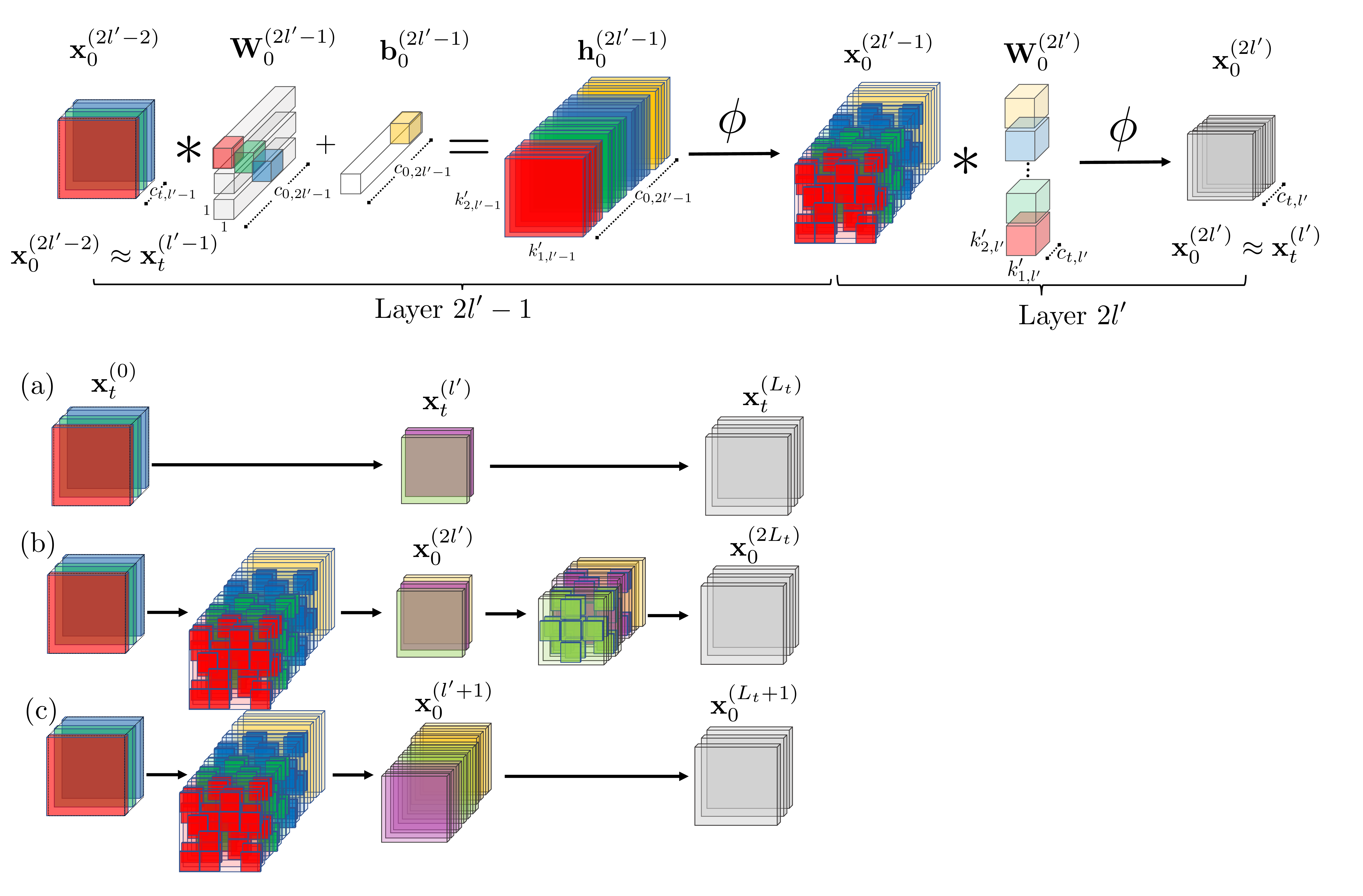}
\caption{Comparison of LT construction approaches. (a) Target network $f_t$. (b) $(2L_t)$-construction. (c) $(L_t+1)$ construction.}\label{fig:lp1l}
\end{figure}
\begin{theorem}[LT existence ($2L_t$-construction)]\label{thm:LTexistfull}
Assume that ${\epsilon, \delta \in {(0,1)}}$, a convolutional target network (possibly with skip connections) $f_t(x): \mathcal{D} \subset \mathbb{R}^{c_0 \times d_0} \rightarrow \mathbb{R}^{c_L \times d_{L_t}}$ with architecture $\bar{c}_t$ of depth $L_t$, and a source network $f_0$ with architecture $\bar{n}_0$ of depth $L_0 = L_t+1$ are given.
Let $\phi$ be the activation function of $f_t$ and $f_0$ with Lipschitz constant $T$. 
Furthermore, let $\phi_0$ be the activation function of $f_0$ in the first layer fulfilling Assumption~\ref{def:act} with $d=0$. 
Define the number $N_{l}$ of effective nonzero parameters in Layer $l$ as $N_{l} = N_{w,l} + N_{m,l}$.
Then, with probability at least $1-\delta$, $f_{0}$ contains a subnetwork $f_{\epsilon} \subset f_{0}$ so that each output component is approximated as $\max_{\bm{x}\in \mathcal{D}} \left|f_{t,iq}(\bm{x})- f_{\epsilon',iq}(\bm{x}) \right| \leq \epsilon$ %
if for all $l \in [L_t]$ we have 
\begin{align*}
  c_{0,l+1} \geq  C c_{t,l} \log\left(\frac{1 }{\min\{\epsilon_l, \rho \delta/N_l \} }\right), %
\end{align*}
and $n_{0, 1} \geq C c_{t,0} \log\left(\frac{1 }{\min\{\epsilon_1, \delta \rho \} }\right)$ with $\rho = C N^{1+\gamma}_l \log(1/\min\{ \min_l \epsilon_{l}, \delta \})$ for any $\gamma > 0$.
$\epsilon_l = \frac{\epsilon}{2 T N_{w,l} \prod^L_{s=l+1} (2 (T N_{w,s} + N_{m,s}))}$.
Additionally, we require that the parameters of $f_{0}$ are initialized according to Assumption~\ref{def:initconv} with $\sigma_l = 1$ for $l > 2$, $\sigma_{1} = r/\sigma_{2} $ and $\sigma_{2} = a(\epsilon'')/2$ and suitably chosen $\epsilon''$. 
\end{theorem}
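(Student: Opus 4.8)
The plan is to prove the statement by induction over the $L_t$ target layers, exploiting the linearity of convolutions exactly as in the two-for-one construction but \emph{reusing} the approximated channels across layers instead of rebuilding intermediary copies each time. The invariant I would maintain is: after source layer $l+1$, for every target channel $\bm{x}^{(l)}_{t,i}$ there are at least $m_l \sim C\log(1/\min\{\epsilon_l,\rho\delta/N_l\})$ source channels $\bm{x}^{(l+1)}_{0,i'}$ each approximating $\bm{x}^{(l)}_{t,i}$ up to a controlled error, with the approximations built from mutually independent random weights. These copies serve simultaneously as the output that approximates target layer $l$ and as the base sets for the subset sum problems of target layer $l+1$. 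Given the invariant, a single source layer suffices per subsequent target layer, so that the two extra layers spent only on the first target layer yield the total depth $L_0 = 2 + (L_t-1) = L_t+1$.

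The base case is the first target layer, the only place where the raw data is the input and hence the only place the two-for-one mechanism is genuinely needed. First I would prune the filters of source layer $1$ to univariate filters $w^{(1)}_{0,i'j}=\lambda_{i'j}$, so that the preactivation equals $\lambda_{i'j}\bm{x}^{(0)}_j$, and create for each input channel $j$ copies carrying both signs of $\lambda_{i'j}$. Choosing $\sigma_2=a(\epsilon'')/2$ keeps these preactivations inside the neighbourhood on which Assumption~\ref{def:act} guarantees $|\phi_0-\widehat\phi_0|\le\epsilon'$, and choosing $\sigma_1=r/\sigma_2$ makes the products $w^{(2)}_{0,ii'q}\lambda_{i'j}$ distributed as a scaled product of two uniforms, exactly the base set required by \Cref{thm:subsetsum}. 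Combining positive and negative parts through the identity $x\approx r(\phi_0(x)-\phi_0(-x))$ and the convolution-linearity identity introduced at the start of the existence results, the preactivation of source layer $2$ reproduces $\bm{h}^{(1)}_{t,i}$ up to the subset sum error plus the $O(\epsilon')$ activation error; pushing it through $\phi$ yields approximate copies of $\bm{x}^{(1)}_{t,i}$. This is the single-layer specialization of \Cref{thm:LTexist2L}, which I would reuse, except that I replicate each output channel $m_1$ times, explaining the requirement $n_{0,1}\ge C c_{t,0}\log(1/\min\{\epsilon_1,\delta\rho\})$; the condition $\epsilon''=g^{-1}(\cdots)$ is precisely what keeps the accumulated activation error within the error budget.

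For the inductive step, $l\ge 2$, the activation trick is no longer needed: by the invariant the input to source layer $l+1$ already consists of copies $\bm{x}^{(l)}_{0,j'}\approx\bm{x}^{(l-1)}_{t,j}$, so the convolution-linearity identity gives $\sum_{j'\in I_j}\bm{W}^{(l+1)}_{0,ij'}*\bm{x}^{(l)}_{0,j'}\approx(\sum_{j'\in I_j}\bm{W}^{(l+1)}_{0,ij'})*\bm{x}^{(l-1)}_{t,j}$, and I would approximate each target entry $w^{(l)}_{t,ijq}$ by a subset of $\{w^{(l+1)}_{0,ij'q}\}$ via \Cref{thm:subsetsum} with $\sigma_l=1$. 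Biases, skip-connection filters, and residual identities are handled by the same entrywise subset sum, the skip parameters being what $N_{m,l}$ counts. Summing over input channels, adding the bias, and applying $\phi$ gives the approximation of $\bm{x}^{(l)}_{t,i}$, which I again replicate $m_l$ times. The error control is Lipschitz propagation: a per-entry error $\epsilon_l$ is amplified by at most $T$ through $\phi$ and by the number of contributing parameters, which is why I would set $\epsilon_l=\frac{\epsilon}{2TN_{w,l}\prod_{s=l+1}^{L}2(TN_{w,s}+N_{m,s})}$ so that the errors telescope to at most $\epsilon$ at the output, the factor $N_{m,s}$ entering additively because a skip connection forwards an already-approximated signal.

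The main obstacle — and the essential difference from the $2L_t$ case — is the probabilistic bookkeeping created by reusing neurons. Because every copy at layer $l$ is consumed by many subset sum problems at layer $l+1$, and each target channel is in turn approximated by many copies, I cannot treat the $N_l$ subset sum problems of a layer as using fresh independent randomness for free. I would resolve this by provisioning a polynomial surplus of copies: allocating $\rho=C N_l^{1+\gamma}\log(1/\min\{\min_l\epsilon_l,\delta\})$ candidate channels guarantees, with the arbitrarily small slack $\gamma>0$, that for each of the $N_l$ target parameters one can select a \emph{disjoint} base set of independent weights, so \Cref{thm:subsetsum} applies independently to each, and a union bound over the $N_l$ problems per layer (each of allowed failure probability $\rho\delta/N_l$) and then over the $L_t$ layers keeps the total failure probability below $\delta$. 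This disjointness-versus-reuse trade-off, rather than any single approximation estimate, is the technically delicate part; the rest is the routine error accumulation of the inductive step and the activation estimate of the base case.
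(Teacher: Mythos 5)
Your overall architecture matches the paper's proof: a two-for-one base case for the first target layer (univariate pruning, sign-split copies, local linearization of $\phi_0$ with $\sigma_2=a(\epsilon'')/2$ and $\sigma_1=r/\sigma_2$), followed by one source layer per remaining target layer in which replicated channels serve directly as subset sum base sets with $X_k\sim U[-1,1]$ and $\sigma_l=1$, the same error recursion $\norm{x^{(l)}_{t}-x^{(l+1)}_{0}}_{\infty}\le TN_{w,l}\epsilon_l+(TN_{w,l}+N_{m,l})\max_{s\le l-1}\norm{x^{(s)}_{t}-x^{(s+1)}_{0}}_{\infty}$ yielding the stated $\epsilon_l$, and the same treatment of skip connections through $N_{m,l}$.

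The one genuine gap is in your justification of $\rho$, which you yourself flag as the technically delicate part. You present it as an independence problem: that reusing copies forces you to provision a polynomial surplus so that each of the $N_l$ target parameters gets a \emph{disjoint} base set of independent weights. But disjointness is automatic and is not what $\rho$ is for: for distinct target parameters $(i,j,q)$ the base sets $\{w^{(l+1)}_{0,ij'q}\}_{j'\in I_j}$ involve pairwise distinct layer-$(l+1)$ tensor entries (different $i$ or $q$ index different weights, and the $I_j$ are disjoint across $j$), and conditioning on the already-constructed input copies leaves these weights i.i.d., so the subset sum corollary applies to each problem without any surplus. The actual difficulty is a counting one that your proposal does not resolve: each target channel at layer $l$ must be replicated roughly $C\log(1/\min\{\epsilon_l,\delta/\rho\})$ times so that layer $l+1$ has base sets to draw from, and each replica costs a fresh batch of $N_l$ subset sum problems; hence the total number of problems is $\rho'=\sum_l\rho'_l$ with $\rho'_l\le C N_l\log(1/\min\{\epsilon_{l+1},\delta/\rho'\})$, a condition in which $\rho'$ appears on both sides. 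The paper closes this fixed point by checking that $\rho=CN_t^{1+\gamma}\log(1/\min\{\min_l\epsilon_l,\delta\})$ satisfies it because $CN_t^{\gamma}\ge\log N_t$. You write down the correct formula for $\rho$ but never confront the self-reference, and your concluding union bound ``over the $N_l$ problems per layer, each of allowed failure probability $\rho\delta/N_l$'' both undercounts the problems (it omits the replication factor) and, taken literally, sums to $\rho\delta>\delta$ per layer.
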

The proof is given in the appendix.
The main challenge in the derivation is to identify the size of the required subset sum blocks, as it depends on the number of total subset sum problems that need to be solved.
This number in turn depends on the subset sum block sizes.
Both need to be balanced as stated in the theorem.
We observe that the block size is potentially larger in the ($L_t+1$)-construction than in the ($2L_t$)-construction if more subset sum problems need to be solved to create multiple versions of the previous channel directly.
But this factor enters only the logarithm and is thus small.
The fact that the ($2L_t$)-construction requires often double the amount of channels to regard the analogs of the positive and negative part separately, is usually a stronger requirement.
For a very high number of target parameters, the required $L_t+1$ blocks might still be larger.
Furthermore, the LT in the $L_t+1$-construction consists of many more parameters, yet, less neurons, which is often dominating the computational needs on GPUs.
We discuss these differences in detail in the experiments section.
Regardless of the advantages and disadvantages of each construction, the main purpose of this theorem is to show that LTs exist that can leverage most of the source network's depth.
Which construction could be found by different pruning algorithms and which one would be better is a different question.
\section{Experiments}
Our theoretical insights suggest that source networks do not need to be much larger than the networks that we want to approximate by a lottery ticket - at least with regard to how our width requirement scales with the relevant parameters.
Three possible challenges could still arise in practice.
First, the size of the constant in our width requirement might be impractically large.
Second, deep target networks consisting of many parameters might be very fragile to small errors in their parameters so that $\epsilon_l$ is so small that even $\log(1/\epsilon_l)$ in our width requirement is practically too large.
Similarly, it could be the case that we would need to solve so many subset sum problems, in particular in the one-layer-for-one construction, that $\delta/N$ would be too small.
The following experiments rule out these three concerns and show that constructing lottery tickets by solving subset sum approximations is practically feasible.

\begin{figure*}
\includegraphics[width=\textwidth]{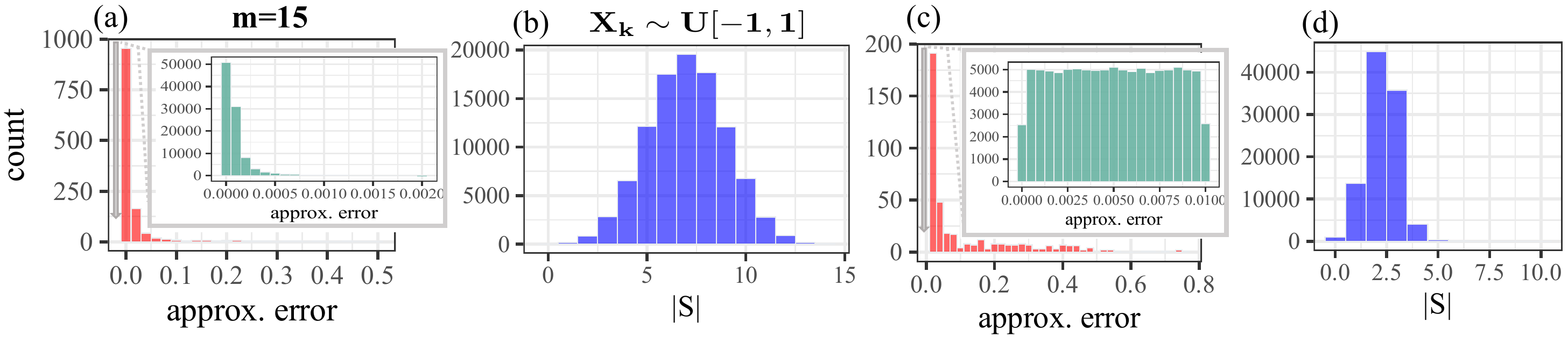}
\caption{Subset sum approximation statistics. (a) \& (c): Error of solving $10^5$ independent subset sum problems. Each problem selects $|S|$ elements out of $m=15$ independent random variables $X_k \sim U[-1,1]$ to approximate a randomly drawn target $z\sim U[-1,1]$. The green histogram in the right corner focuses on the smallest errors. (b) \& (d): Size of approximating subset. (a) \& (b): Best subset selection based on exhaustive search. (c) \& (d): Smallest subset that achieves an error of maximally $\epsilon = 0.01$.}
  \label{fig:subsetsumstats}
\end{figure*}

In fact, our proofs define implicitly an algorithm that approximates a target network by pruning a source network consisting of $m (c_{t,l}+1) + c_0$ channels in each convolutional layer with $l > 1$, $2 m_{2} (c_{t,0}+1) + c_0$ channels in the first layer, and $c_{t,L}$ channels in the output layer.
$m$ here refers to the size of subset sum blocks that we choose. 
$c_0$ gives us the option to fail sometimes in solving a subset sum problem and continue with pruning another neuron instead.
It is negligible in our experiments. 

Thus, our first question has to be: How large should we choose $m$ (and $m_2$)?
Even though solving subset sum problems is in general NP-hard, for small enough $m$ we can still solve them optimally by exhaustively evaluating all subsets. 
Fig.~\ref{fig:subsetsumstats}~(a) presents statistics on these optimal solutions for $m=15$, which is generally considered to be sufficiently large in the literature.
On average, the error within the 95\% standard confidence interval is $(6 \pm 0.8) 10^{-4}$, but in most cases it is much smaller, as the left skewed distribution indicates.
It exceeds $0.01$ only in $0.3\%$ of the cases.

In addition to the approximation error, we are also interested in the number of parameters that enter our LT and thus the size of the selected subset $|S|$. 
If our target layer consists of $N_l$ nonzero parameters, we expect that our LT  consists on average of $m \mathds{E}(|S|) N_l$ parameters. %
Fig.~\ref{fig:subsetsumstats}~(b) shows the distribution of $|S|$, which has $\mathds{E}(|S|) \approx 7$. 
For increasing $m$, the average error decreases but the average subset size would increase, which is not ideal for constructing sparse tickets.
However, LTs do not require the optimal subset.
A subset that reaches a small enough error would be sufficient.
Fig.~\ref{fig:subsetsumstats}~(c-d) show therefore statistics for a subset selection process that searches through all subsets but stops when it finds a subset whose approximation error does not exceed $0.01$. 
In this case, we only need $\mathds{E}(|S|) \approx 2.3$.
This observation is also relevant for the LT's sparsity assessment.
Commonly, sparsity is reported relative to a dense network.
Thus, if the target layer has sparsity $\rho_t = N_l/(c_{t,l} c_{t,l-1} K)$, the LT has roughly $\rho_{\epsilon} = m \mathds{E}(|S|) N_l/([m (c_{t,l}+1) + c_0][m (c_{t,l-1}+1) + c_0]  K) \approx  \rho_t \mathds{E}(|S|)/m$.
Accordingly, the choice $m=15$ and allowed error $\epsilon_l = 0.01$ let's us construct LTs that have 15\% of the target's sparsity.
We could always reduce this number further by equipping the source network with a higher number of channels that we can prune away.
The real quantity of interest is therefore the total number of required nonzero parameters, which we report in the following experiments alongside the neural network performance. 

We could repeat a similar analysis for the two-layers-for-one construction.
In this case, we would have to solve subset sum problems with random variables that are distributed as the product of two uniform random variables $X_k = U[-1, 1] U[-1, 1]$.
Statistics regarding this case are presented in the appendix in Fig.~\ref{fig:subsetsumstats2}.
The overall distributions look similar. 
A base set size of $m=15$ random variables is also sufficient but results in an average error of $(7.6 \pm 0.3) 10^{-3}$, which is more than a magnitude higher than in the previous case.
Furthermore, we fail to achieve an error smaller than $0.01$ in $3.5 \%$ of the cases and rely on subset sizes of $\mathds{E}(|S|) \approx 2.5$. 
This only affects the first layer of our experiments though.

\begin{table}[t]
\caption{Test accuracy in \%  of pruned LTs and their $L=3$ target on MNIST for $(L+1)$ construction. Averages and $0.95$ standard confidence intervals are reported for $10$ repetitions.} %
\label{table_simple}
\vskip 0.15in
\begin{center}
\begin{small}
\begin{sc}
\begin{tabular}{lccr}
\toprule
 & Target  & LT  \\
\midrule
ReLU & 98.8 & 98.72 $\pm$ 0.04\\ 
Leaky ReLU  & 98.5 & 98.5 $\pm$ 0.03 \\ 
Tanh & 98.14 & 98.09 $\pm$ 0.09 \\ 
Sigmoid & 98.52 & 98.5 $\pm$ 0.004 \\ 
\bottomrule
\end{tabular}
\end{sc}
\end{small}
\end{center}
\vskip -0.1in
\end{table}

To demonstrate that an error of $\epsilon_l=0.01$ per parameter is indeed acceptable to obtain LTs most of the time, we employ the described pruning strategy that solves subset sum approximation problems with small subsets that try to achieve an error of maximally $\epsilon_l=0.01$ with respect to each parameter of a given target network.
We identify two different types of target networks with the established Synflow algorithm and its open source code \citep{synflow}, which we apply with exponential annealing of the target sparsity with 12 steps on a machine with Intel(R) Core(TM) i9-10850K CPU @ 3.60GHz processor and GPU NVIDIA GeForce RTX 3080 Ti. 
Between pruning steps, we train the pruned network for $50$ epochs. 

The first target network type is a small scale example with 3 layers and is pruned and trained on MNIST \citep{mnist}.
Its first two layers are convolutional with 32 channels and 3x3 filters before pruning. 
The last layer is a fully-connected classification layer with softmax activation functions. 
We obtain separate targets for four commonly used activation functions in the first two layers: \textsc{ReLU}, \textsc{leaky ReLU}, \textsc{sigmoid}, and \textsc{tanh} and explicitly approximate the convolutional layers with an $L+1$ construction by solving the associated subset sum problems. 
The performance of the resulting neural networks is reported in Table~\ref{table_simple} and the number of the pruned nonzero parameters in Table~\ref{table2_simple}.
We observe a very similar performance of the LTs in comparison with the target networks.
While we report averages over $50$ independent runs, note that we can always find a couple of lucky solutions that outperform the target network on the test set. 
If we would use the optimal subset instead of a small one, we would see no significant difference between the target and the LT.
Yet, the resulting LT would also consist of more parameters.
The delicate trade-off between LT size and potential accuracy has to be solved in practice.

\begin{table}[t]
\caption{Number of prunable neural network parameters for experiments reported in Table~\ref{table_simple} regarding the MNIST example.} %
\label{table2_simple}
\vskip 0.15in
\begin{center}
\begin{small}
\begin{sc}
\begin{tabular}{lccr}
\toprule
 & Target  & LT  \\
\midrule
ReLU & 945 & 4386 $\pm$ 34 \\ 
Leaky ReLU & 940 & 4545 $\pm$ 44\\ 
Tanh &  950 & 3269 $\pm$ 19 \\ 
Sigmoid & 953 & 4198 $\pm$ 59 \\ 
\bottomrule
\end{tabular}
\end{sc}
\end{small}
\end{center}
\vskip -0.1in
\end{table}

\begin{table}[t]
\caption{Test accuracy in \%  of pruned LTs and their target for ResNet-22 on CIFAR10 for $(L+1)$ construction. Averages and $0.95$ standard confidence intervals are reported for $50$ repetitions.} %
\label{table}
\vskip 0.15in
\begin{center}
\begin{small}
\begin{sc}
\begin{tabular}{lcccr}
\toprule
 & Target  & LT & LT (best 50\%) \\
\midrule
ReLU & 80.68 & 80.32 $\pm$ 0.13 & 80.69 $\pm$ 0.09 \\ 
Leaky ReLU  & 80.68 & 80.11 $\pm$ 0.18 & 80.6 $\pm$ 0.1 \\ 
Tanh & 80.86  &  80.48 $\pm$ 0.13 & 80.8 $\pm$ 0.1 \\ 
Sigmoid & 72.66 & 69 $\pm$ 2 & 73.2 $\pm$ 0.9\\ 
\bottomrule
\end{tabular}
\end{sc}
\end{small}
\end{center}
\vskip -0.1in
\end{table}

The second target that we consider has a more realistic structure that is much deeper and includes residual blocks.
Similar to before, we prune and train ResNet-22 on CIFAR-10 \citep{cifar10}. 
To save computational time, we draw for each parameter and error and associated set size from the empirical distribution that we derived by solving $10^5$ independent subset sum problems as shown in Figure~\ref{fig:subsetsumstats}~(c-d) (and the appendix for the first layer).  
The results are presented in Tables~\ref{table}\&\ref{table2}.

\begin{table}[t]
\caption{Number of prunable neural network parameters for experiments reported in Table~\ref{table} regarding the ResNet-22 example.} %
\label{table2}
\vskip 0.15in
\begin{center}
\begin{small}
\begin{sc}
\begin{tabular}{lccr}
\toprule
 & Target  & LT  \\
\midrule
ReLU & 31069 & 1058140 $\pm$ 559 \\ 
Leaky ReLU  & 31113 & 1059537 $\pm$ 567 \\ 
Tanh & 64491 & 2193086 $\pm$ 723\\ 
Sigmoid & 116048 & 3934621 $\pm$ 1248 \\ 
\bottomrule
\end{tabular}
\end{sc}
\end{small}
\end{center}
\vskip -0.1in
\end{table}

\section{Conclusions}
We have proven that pruning randomly initialized convolutional neural networks, including residual blocks and skip connections, can be a viable strategy to identify smaller scale networks.
These lottery tickets (LTs) can be found if they approximate target networks whose width is smaller by a logarithmic factor than the original random source network. 
In practice, a factor of $1/15$ is sufficient, as we have verified in experiments.
Our proofs are the first to cover residual and skip connections and other activation functions than ReLUs for convolutional layers.
We have furthermore presented a novel LT construction for convolutional layers that is not restricted to positive inputs and discussed two versions.
The first one assumes that the depth of the target network is $L_t \leq L_0/2$.
This construction is more parameter efficient but requires a relatively high depth and usually a higher number of neurons than the second version.
In the second version, the target depth can only be slightly smaller than the depth of the large random network $L_t \leq L_0-1$. 
While the resulting LT consists of many more parameters relative to the target representation, the target representation itself can be much sparser, as it is allowed to utilize most of the available depth $L_0$.
Furthermore, this construction indicates that not only extremely deep networks contain LTs. 
This is an important finding, as we can thus focus our pruning efforts on neural networks of similar depth as contemporary architectures, which are feasible to train. %

\bibliography{ref}
\bibliographystyle{icml}

\newpage
\appendix
\onecolumn
\section{Subset Sum Approximation}
We generally have multiple random neurons and parameters available to approximate a target parameter $z$ by $\widehat{z}$ up to error $\epsilon$ so that $|z-\widehat{z}| \leq \epsilon$. 
Let us denote these random parameters in the source network as $X_i$.
If these contain a uniform distribution, as defined below, we can utilize a subset of them for approximating $z$.
\begin{definition}\label{def:containUniform}
A random variable $X$ contains a uniform distribution if there exist constants $\alpha \in (0,1]$, $c, h > 0$  and a random variable $G_1$ so that $X$ is distributed as $X \sim \alpha U[c-h,c+h] + (1-\alpha) G_1$.
\end{definition}
\citep{uniExist} extended results by \citep{subsetsum} to solve subset sum approximation problems if the random variables are not necessarily identically distributed. In addition, they also cover the case $|z|>1$.
The general statement follows below.
\begin{corollary}[Subset sum approximation \citep{subsetsum,uniExist}]\label{thm:subsetsumExtended}
Let $X_1, ..., X_{m}$  be independent bounded random variables with $|X_k| \leq B$.
Assume that each $X_k \sim X$ contains a uniform distribution with potentially different $\alpha_k > 0$ (see Definition~\ref{def:containUniform}) and $c=0$. 
Let $\epsilon, \delta \in (0,1)$ and $t \in \mathbb{N}$ with $t \geq 1$ be given. 
Then for any $z \in [-t,t]$ there exists a subset $S \subset [m]$ so that with probability at least $1-\delta$ we have $|z - \sum_{k \in S} X_k| \leq \epsilon$ if
\begin{align*}
    m \geq C\frac{\max\left\{1,\frac{t}{h}\right\}}{\min_k\{\alpha_k\}}\log\left(\frac{B}{\min\left( \frac{\delta}{\max\{1,t/h\}}, \frac{\epsilon}{\max\{t,h\}}\right)} \right).
\end{align*}
\end{corollary}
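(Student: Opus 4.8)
The plan is to reduce the general statement to the clean independent and identically distributed case of Cor.~\ref{thm:subsetsum}, which already supplies the hard analytic core (the exponential shrinkage of the gaps between achievable subset sums). Two reductions remain, and they are exactly the two generalizations that separate Cor.~\ref{thm:subsetsumExtended} from Cor.~\ref{thm:subsetsum}: first, handling variables that are only required to \emph{contain} a uniform distribution with possibly different mixing weights $\alpha_k$ (Def.~\ref{def:containUniform}); and second, handling targets $z$ whose magnitude exceeds the half-width $h$ of the uniform part, i.e. $t > h$.

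First I would extract genuine uniform samples. Using Def.~\ref{def:containUniform} with $c=0$, I write each $X_k = B_k U_k + (1-B_k)G_{1,k}$ with independent $B_k\sim\mathrm{Bernoulli}(\alpha_k)$ and $U_k\sim U[-h,h]$. The indices with $B_k=1$ carry independent samples that are \emph{exactly} $U[-h,h]$, and the construction of the approximating subset $S$ will only ever select such indices, so the mixture components $G_{1,k}$ never contribute and need not be controlled. A Chernoff bound guarantees that at least $\tfrac12 m\min_k\alpha_k$ clean uniform samples are available except on an event of probability decaying exponentially in $m\min_k\alpha_k$, which I fold into $\delta$. This is the origin of the factor $1/\min_k\alpha_k$ multiplying the sample-count bound. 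Rescaling by $h$, since $U[-h,h]=h\,U[-1,1]$, turns the task of approximating $z$ within $\epsilon$ from $U[-h,h]$ samples into the task of approximating $z/h$ within $\epsilon/h$ from $U[-1,1]$ samples, which is the setting of Cor.~\ref{thm:subsetsum}.

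For the range extension I would split both the samples and the target. When $t\le h$ the rescaled target $z/h$ already lies in $[-1,1]$ and a single application of Cor.~\ref{thm:subsetsum} with precision $\epsilon/h=\epsilon/\max\{t,h\}$ and confidence $\delta$ suffices, giving $\max\{1,t/h\}=1$. When $t>h$ I partition $[m]$ into $J=\lceil t/h\rceil$ groups, extract clean uniforms inside each group as above, and decompose the target as $z=\sum_{j=1}^J z_j$ with each piece $z_j\in[-h,h]$. Group $j$ is used to approximate $z_j$ within $\epsilon/J$ and with failure budget $\delta/J$ via Cor.~\ref{thm:subsetsum}; taking $S=\bigcup_j S_j$ yields total error $\sum_j \epsilon/J=\epsilon$, and a union bound over the $J$ groups (and their extraction events) keeps the overall failure below $\delta$. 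Tracking the per-group requirement through the rescaling produces precisely the stated bound, since $J=\max\{1,t/h\}$, the per-group confidence $\delta/J=\delta/\max\{1,t/h\}$ and the per-group precision $(\epsilon/J)/h=\epsilon/t=\epsilon/\max\{t,h\}$ all appear inside the logarithm, while $J$ itself multiplies the count. The boundedness $|X_k|\le B$ and the explicit constant enter only through the general form of the base bound in \citep{subsetsum}.

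The routine analytic work is already discharged by Cor.~\ref{thm:subsetsum}, so I expect the main obstacle to be the bookkeeping: coordinating the partition into $J$ groups with the Bernoulli extraction so that each group independently retains enough exact uniform samples, and distributing the error budget $\epsilon$ and the confidence budget $\delta$ across the $J$ blocks and the $J$ extraction events so that the three $\max$-expressions emerge exactly as written, including a clean treatment of the boundary case $t\le h$.
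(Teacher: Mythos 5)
The paper does not actually prove this corollary: it is imported verbatim from \citep{subsetsum,uniExist}, with Cor.~\ref{thm:subsetsum} then presented as its special case, so there is no in-paper argument to compare against. Your reconstruction is nonetheless essentially the standard proof from those cited works, and it is sound: the Bernoulli-mixture extraction of exact $U[-h,h]$ samples (with a Chernoff bound supplying the $1/\min_k\alpha_k$ factor and the convention that $S$ only ever draws from clean indices, so the $G_{1,k}$ components never need control) is exactly the Pensia-style argument, and the splitting of $z$ into $J=\lceil t/h\rceil$ pieces of magnitude at most $h$, each handled with budget $\epsilon/J$ and $\delta/J$, is how the range extension to $|z|>1$ is done in \citep{uniExist}. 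Two small points deserve care. First, to avoid circularity you cannot literally invoke the paper's Cor.~\ref{thm:subsetsum} as your base case, since the paper derives that statement \emph{from} Cor.~\ref{thm:subsetsumExtended}; you must ground the reduction in Lueker's original i.i.d.\ $U[-1,1]$ theorem, which is what Cor.~\ref{thm:subsetsum} restates, so this is a citation hygiene issue rather than a mathematical one. Second, your remark that $B$ ``enters only through the general form of the base bound'' is slightly off: since your construction selects only clean uniform samples bounded by $h\le B$, your argument actually yields a bound with $h$ (or $1$ after rescaling) in place of $B$ in the numerator of the logarithm, which is at least as strong as the stated one; you should say explicitly that the claimed inequality then follows because enlarging the numerator only weakens the requirement. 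With those clarifications the bookkeeping you describe (partitioning before extraction, union-bounding over the $J$ groups and $J$ extraction events) goes through and recovers the three $\max$ expressions as written.
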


\section{Additional Statistics on Solving Subset Sum Problems}
In the 2-layers-for-one construction, we solve subset sum approximation problems based on independent random variables with distribution $X_k \sim U[-1,1] U[-1,1]$, i.e., the product of two random variables.
To be more precise, for ReLUs, the random variables are distributed as $X_k \sim U[0,1] U[-1,1]$ or $X_k \sim U[-1,0] U[-1,1]$.
Because of the symmetry of the uniform random variables, these are all identically distributed and we can just focus on the case $X_k \sim U[-1,1] U[-1,1]$.
Fig.~\ref{fig:subsetsumstats2} shows the corresponding statistics that are based on $10^5$ independent subset sum problem solutions for base sets of size $m=15$. 

A base set size of $m=15$ random variables results in an average error of $(7.6 \pm 0.3) 10^{-3}$, which is more than a magnitude higher than in the case of $X_k \sim U[0,1]$.
Furthermore, we fail to achieve an error smaller than $0.01$ in $3.5 \%$ of the cases and rely on subset sizes of $\mathds{E}(|S|) \approx 2.5$.

\begin{figure*}
\includegraphics[width=\textwidth]{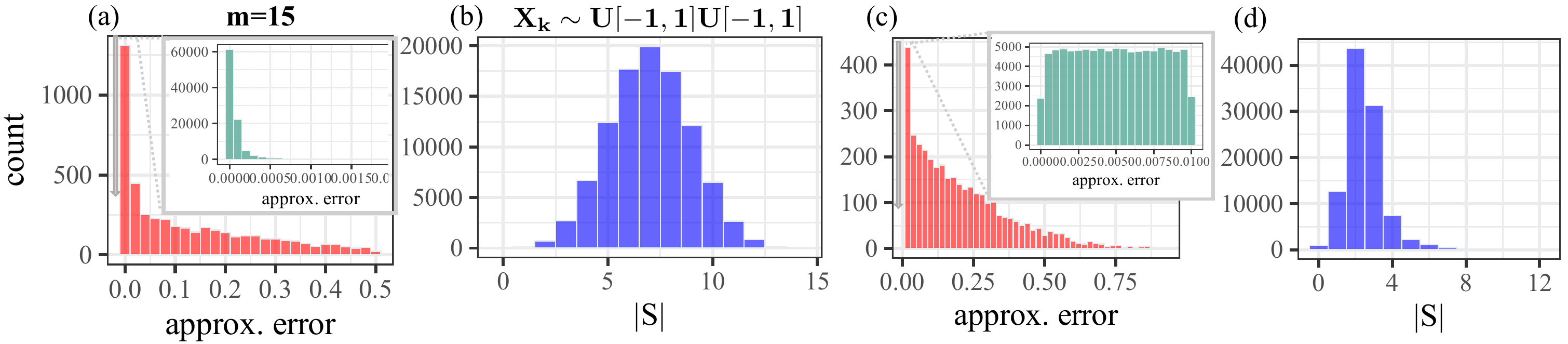}
\caption{Subset sum approximation statistics. (a) \& (c): Error of solving $10^5$ independent subset sum problems. Each problem selects $|S|$ elements out of $m=15$ independent random variables $X_k \sim U[-1,1] U[-1,1]$ to approximate a randomly drawn target $z\sim U[-1,1]$. The green histogram in the right corner focuses on the smallest errors. (b) \& (d): Size of approximating subset. (a) \& (b): Best subset selection based on exhaustive search. (c) \& (d): Smallest subset that achieves an error of maximally $\epsilon = 0.01$.}
  \label{fig:subsetsumstats2}
\end{figure*}

\section{Proofs}
We frequently use the supremum norm of tensors, which is similarly defined as a vector norm refers to the maximum absolute value of all components $\norm{X}_{\infty} := \max_{i,j,k} |x_{ijk}|$.
It should not be confused with the operator norm.

\subsection{Proof of Thm.~\ref{thm:LTexist2L}}
The following corollary covers an important step in the proof of our ($2L$)-construction, as it focuses on the approximation of a single layer by two layers in the source network.
\begin{corollary}[Layerwise approximation]\label{cor:twice}
Let a convolutional target layer $\bm{x}^{(l')}_{t,i} = \phi\left(\sum^{c_{l-1}}_{j=1} \bm{W}^{(l')}_{t,ij} * \bm{x}^{(l'-1)}_{t,j} + b^{(l')}_{t,i} \right)$ with $c_{l'}$ output, $c_{l'-1}$ input channels, and filter size $k_{l'}$,  $N_t$ nonzero parameters and activation function fulfilling Assumption~\ref{def:act} with $d=0$ be given. 
Moreover, we have a two-layer source network 
$\bm{x}^{(2l')}_{0,i} = \phi\left(\sum^{c_{1/2}}_{s=1} \bm{W}^{(2l')}_{0,is} * \phi\left(\sum^{c_{l'-1}}_{j=1} \bm{W}^{(2l'-1)}_{0,sj} * \bm{x}^{(2l'-2)}_{0,j} \right) \right)$ with parameters $\bm{W}^{(2l')}_{0} \in \mathbb{R}^{c_{l'} \times c_{1/2} \times k_{l'}}$ and $\bm{W}^{(2l'-1)}_{0} \in \mathbb{R}^{c_{1/2} \times c_{l'-1} \times 1}$.
All its tensor entries are independently uniformly distributed as $w^{(2l'-1)}_{0,sjq} \sim U[-\sigma, \sigma]$ and $w^{(2l')}_{0,isq} \sim U[-r/\sigma, r/\sigma]$. 
Then, for every ${\epsilon', \delta' \in {(0,1)}}$, with probability $1-\delta'$, there exists a sub-network $\bm{x}^{(2l')}_{\epsilon'} \subset \bm{x}^{(2l')}_{0}$ so that $\norm{\bm{x}^{(l')}_{t} - \bm{x}^{(2l')}_{\epsilon}}_{\infty} \leq \epsilon'$ if
\begin{align}
    c_{1/2} \geq C c_{l'-1} \log\left(\frac{N_t}{ \min\{\epsilon'/(3T), \delta'/2 \}} \right),
\end{align}
the input components fulfill $\norm{\bm{x}^{(l'-1)}_{t, i} - \bm{x}^{(2l'-2)}_{0,i}}_{\infty} \leq \epsilon'/(3 T N_t)$, $\norm{1- \bm{x}^{(2l'-2)}_{0, c_{l'-1}+1}}_{\infty} \leq \epsilon'/(3 T N_t)$, and $\sigma = \min\left\{1,a(\epsilon'')/2\right\}$ with
\begin{align*}
\epsilon'' =  g^{-1} \left(\frac{\epsilon'}{3 T N_t C \log\left(\frac{N_t}{\min\{\epsilon'/(3T), \delta'/2\}}\right) \frac{r}{2} } \right)
\end{align*}
for $g(\epsilon'') = \epsilon''/(a(\epsilon''))$.
\end{corollary}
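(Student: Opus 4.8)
The plan is to follow the informal construction of the \emph{Two Layers for One} subsection and make each of its three approximation steps quantitative, so that the stated width and scaling conditions fall out of a union bound over subset sum problems together with a careful error budget.

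First I would fix the first (odd) source layer. Since its filters are univariate, I prune each intermediate neuron $s$ down to a single nonzero input connection $j=j(s)$, keeping $w^{(2l'-1)}_{0,sj1}=\lambda_{sj}\sim U[-\sigma,\sigma]$, and partition the neurons into groups $I_j$ by their input channel, splitting each $I_j$ into a positive-sign part ($\lambda_{sj}>0$) and a negative-sign part ($\lambda_{sj}<0$). This yields preactivations $\bm{h}^{(2l'-1)}_{s}=\lambda_{sj}\bm{x}^{(2l'-2)}_{0,j}$, exactly the scaled copies of the (approximate) input we need as a subset-sum base set. Because $\sigma=\min\{1,a(\epsilon'')/2\}$ and all tensor entries are bounded by $1$, every argument $\lambda_{sj}\bm{x}^{(2l'-2)}_{0,j}$ lies in the neighborhood $[-a(\epsilon''),a(\epsilon'')]$ on which Assumption~\ref{def:act} guarantees $|\phi-\widehat{\phi}|\le\epsilon''$. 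With $d=0$ this gives $\phi(\lambda_{sj}\bm{x})=\mu_{\pm}(\lambda_{sj})\lambda_{sj}\bm{x}+O(\epsilon'')$, and the identity $r(\mu_{\pm}(x)+\mu_{\pm}(-x))=1$ lets a matched positive/negative pair of groups reconstruct the identity map on $\bm{x}^{(2l'-2)}_{0,j}$.

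Next I would reduce the construction of the second (even) layer to subset sum. By the linearity of convolution, approximating the target tensor entry $w^{(l')}_{t,ijq}$ amounts to choosing a subset of $\{\,r\,w^{(2l')}_{0,isq}\lambda_{sj} : s\in I_j,\ \lambda_{sj}>0\,\}$ (and separately of the negative-sign part) whose sum is within $\epsilon'/(3TN_t)$ of $w^{(l')}_{t,ijq}$, and likewise the bias is obtained from $\sum_{s} w^{(2l')}_{0,isq}\mu_{\pm}(\lambda_{sj})\lambda_{sj}$. Each summand is a product of two independent (rescaled) uniform variables, so Cor.~\ref{thm:subsetsum} in its $U[-1,1]U[-1,1]$ form applies. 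The decisive observation, inherited from the independence of the second-layer weights $w^{(2l')}_{0,isq}$ across the output channel $i$ and filter position $q$, is that the \emph{same} group $I_j$ simultaneously serves all $(i,q)$ subset-sum problems; hence the required intermediate width is of order $c_{l'-1}\,m$ with $m$ the common base-set size. Demanding that each of the at most $2N_t$ problems succeed with probability $1-\delta'/(2N_t)$ forces $m\ge C\log\!\big(N_t/\min\{\epsilon'/(3T),\delta'/2\}\big)$ (since $1/\min\{\epsilon'/(3TN_t),\delta'/(2N_t)\}=N_t/\min\{\epsilon'/(3T),\delta'/2\}$), which is exactly the stated bound on $c_{1/2}$, and a union bound over all problems yields overall success probability $1-\delta'$.

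Finally I would assemble the error budget, splitting the tolerance $\epsilon'$ into three parts, each controlled so that after the output Lipschitz factor $T$ it contributes at most $\epsilon'/3$. The first part absorbs the input error: since the preactivation $\sum_j \bm{W}^{(l')}_{t,ij}*\bm{x}^{(l'-1)}_{t,j}+b^{(l')}_{t,i}$ has at most $N_t$ nonzero terms with coefficients bounded by $1$, the hypothesis $\norm{\bm{x}^{(l'-1)}_{t}-\bm{x}^{(2l'-2)}_{0}}_{\infty}\le\epsilon'/(3TN_t)$ propagates to at most $\epsilon'/(3T)$ in the preactivation (the extra constant channel $\bm{x}^{(2l'-2)}_{0,c_{l'-1}+1}\approx 1$ carries the bias term the same way). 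The second part is the accumulated subset-sum error, again at most $N_t$ times the per-parameter tolerance. The third, and genuinely delicate, part is the activation-linearization error $O(\epsilon'')$ per neuron: it is amplified by the second-layer weight magnitude $|w^{(2l')}_{0,isq}|\le r/\sigma=2r/a(\epsilon'')$, by the block size $m$, and by the $N_t$ summed parameters, giving a total of order $N_t\, m\, r\, g(\epsilon'')$ with $g(\epsilon'')=\epsilon''/a(\epsilon'')$. Requiring this to stay within $\epsilon'/(3T)$ and inverting $g$—which is legitimate precisely because Assumption~\ref{def:act} makes $g$ invertible on $(0,\epsilon'']$—produces the stated choice of $\epsilon''$ and hence of $\sigma=a(\epsilon'')/2$. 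The main obstacle is this last circular balancing: $\sigma$ depends on $\epsilon''$, which depends on the block size $m$, which itself depends on $N_t,\epsilon',\delta'$; pinning it down cleanly through $g^{-1}$ is where the real work of the proof lies, while the union bound and the convolution-linearity reduction are comparatively routine.
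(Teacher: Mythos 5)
Your proposal is correct and follows essentially the same route as the paper's proof: univariate pruning of the odd layer into sign-split base sets $I_{j,\pm}$, reduction of the even layer to subset-sum problems over products of uniforms that are shared across all $(i,q)$ by linearity of the convolution, a union bound over at most $2N_t$ problems with per-problem failure probability $\delta'/(2N_t)$, and the same three-way error budget (subset-sum, input propagation, activation linearization) with the last term balanced through $g^{-1}$ to fix $\epsilon''$ and $\sigma=a(\epsilon'')/2$. The only cosmetic differences are that the paper invokes the extended subset-sum result (Cor.~\ref{thm:subsetsumExtended}) by noting the product variables contain a uniform distribution, and it tracks the slightly enlarged input bound $1+\epsilon'/(3TN_t)$, which is exactly what the factor $2$ in $\sigma=a(\epsilon'')/2$ absorbs.
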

\begin{proof}
The first step of our construction of the LT $\bm{x}^{(2l')}_{\epsilon'}$ is to prune the weight tensors in the first layer of the source network $\bm{W}^{(2l'-1)}_{0,js}$ to univariate form.
For each input neuron $j$, we reserve $|I_j|$ neurons in the intermediary layer $2l'-1$ with indices $I_j$ so that $w^{(2l'-1)}_{\epsilon',sj1} = w^{(2l'-1)}_{0,sj1} = \lambda_{sj}$ if $s \in I_{j}$ and $w^{(2l'-1)}_{\epsilon',sj1} = 0$ otherwise. 
After pruning, we thus have $\phi\left(\sum^{c_{l'-1}}_{j=1} \bm{W}^{(2l'-1)}_{\epsilon',sj} * \bm{x}^{(2l'-2)}_{0,j} \right) = \phi\left( \bm{W}^{(2l'-1)}_{\epsilon',sj} * \bm{x}^{(2l'-2)}_{0,j} \right) = \phi\left(\lambda_{sj} \bm{x}^{(2l'-2)}_{0,j} \right)$ for an $s \in I_j$.

Per construction of the initialization, $\sigma > 0$ is chosen small enough so that 
\begin{align}
\norm{\phi\left(\lambda_{sj} \bm{x}^{(2l'-2)}_{0,j} \right) -  \mu_{\pm}(\lambda_{sj} \bm{x}^{(2l'-2)}_{0,j} ) \lambda_{sj} \bm{x}^{(2l'-2)}_{0,j} }_{\infty} \leq \epsilon''    
\end{align}
according to Assumption~\ref{def:act}.
We achieve this, as $|\lambda_{sj} x^{(2l'-2)}_{0,jq}| \leq \sigma (1+\epsilon'/(3 T N_t)) \leq a(\epsilon'')$ with $|x^{(2l'-2)}_{0,jq}| \leq 1+\epsilon'/(3 T N_t) $, since $|x^{(l'-1)}_{t,jq}| \leq 1$ is always assumed. 

The next step of pruning is to mask some elements of the tensor in the second layer $\bm{W}^{(2l')}_{0,is}$ so that we can approximate our target parameters.
We set all parameters to zero $w^{(2l')}_{\epsilon',isq} = 0$ with the exception of $w^{(2l')}_{\epsilon',isq} = w^{(2l')}_{0,isq}$ for indices $s \in I_{ijq} \subset I_{j}$.
These index sets are chosen by solving specific subset sum approximation problems based on the following two base sets $ I_{j,+}$ and $I_{j,-}$ with $I_j = I_{j,+} \cup I_{j,-}$.
They are defined according to the incoming link weight $\lambda_{sj}$ in the first layer, i.e. $I_{j,+} = \{s \in I_j \; \mid \; \lambda_{sj} > 0 \}$ and  $I_{j,-} = \{s \in I_j \; \mid \; \lambda_{sj} < 0 \}$. 
The associated random variables $X_{s} =   w^{(2l')}_{0,isq} \lambda_{ij}/r $ are distributed as $U[0,1] U[-1,1]$ or $U[-1,0] U[-1,1]$ per construction. 
As these contain uniform distributions \citep{pensia2020optimal}, according to Thm.~\ref{thm:subsetsumExtended}, with probability $1-\delta'''$ we can find subsets $I^{\pm}_{ijq} \subset I_{j,\pm}$ so that 
\begin{align}\label{eq:sub}
    |w_{t,ijq} - \sum_{s \in I^{\pm}_{ijq}}  X_s | \leq \epsilon''',
\end{align}
if $|I_{j,\pm}| \geq C \log\left(\frac{1}{\min\{\epsilon''', \delta'''\}}\right)$.
Solving two separate problems of this form leads to an index set $I_{ijq} = I^{+}_{ijq} \cup I^{-}_{ijq}$ that defines the parameters that we keep in our LT.

We still have to approximate the target bias.
For that purpose, we have reserved a constant input tensor $\bm{x}^{(2l'-2)}_{0, c_{l'-1}+1}$ with $x^{(2l'-2)}_{0, c_{l'-1}+1, q} \approx 1$ so that $\phi(\lambda_{s (c_{l'-1}+1)}  x^{(2l'-2)}_{0, c_{l'-1}+1, q}) \approx m_{+} \lambda_{s (c_{l'-1}+1)}$.
We thus choose nonzero parameters with indices $I_{ib} \subset I_{c_{l'-1}+1}$ in the second layer that solve the subset sum approximation problem  
\begin{align}\label{eq:subb}
    |b_{t,i} - \sum_{s \in I_{ib}}  X_s | \leq \epsilon''',
\end{align}
as the random variables $X_s = w^{(2l')}_{0,is} \lambda_{s (c_{l'-1}+1)} m_+$ are distributed as $m_+ r U[0,1] U[-1,1]$ (with $| m_+ r| \leq 1$ most of the time), which also contain a uniform distribution.

After pruning the first and the second layer of the source network this way, let us analyze the error that our LT inflicts.
It follows from the Lipschitz continuity of the activation function $\phi$ and our pruning to univariate tensors in the first layer that
\begin{align}\label{eq:all2l}
\begin{split}
& \norm{\bm{x}^{(l')}_{t} - \bm{x}^{(2l')}_{\epsilon}}_{\infty} \leq T \max_i \sum_j \norm{\bm{W}^{(l')}_{t,ij} * \bm{x}^{(l'-1)}_{t,j} + b^{(l')}_{t,i}  - \sum^{c_{1/2}}_{s=1} \bm{W}^{(2l')}_{\epsilon',is} * \phi\left(\lambda_{sj}  \bm{x}^{(2l'-2)}_{0,j} \right) }_{\infty} \\
& \leq T \max_i \sum_j \norm{\bm{W}^{(l')}_{t,ij} * \bm{x}^{(l'-1)}_{t,j} + b^{(l')}_{t,i}  - \sum^{c_{1/2}}_{s=1} \bm{W}^{(2l')}_{\epsilon',is} * \left(\mu_{\pm} \left(\lambda_{sj}  \bm{x}^{(l'-1)}_{t,j} \right) \lambda_{sj} \bm{x}^{(l'-1)}_{t,j}  \right) }_{\infty} \\
& + T \max_i \sum_j \norm{\sum^{c_{1/2}}_{s=1} \bm{W}^{(2l')}_{\epsilon',is} * \left(\mu_{\pm} \left(\lambda_{sj}  \bm{x}^{(2l'-2)}_{0,j}\right)  \lambda_{sj} \left[ \bm{x}^{(2l'-2)}_{0,j} - \bm{x}^{(l'-1)}_{t,j} \right] \right) }_{\infty} \\
& + T \max_i \sum_j \norm{\sum^{c_{1/2}}_{s=1} \bm{W}^{(2l')}_{\epsilon',is} * \left[\mu_{\pm} \left(\lambda_{sj}  \bm{x}^{(2l'-2)}_{0,j} \right) \lambda_{sj}  \bm{x}^{(2l'-2)}_{0,j}  - \phi\left(\lambda_{sj}  \bm{x}^{(2l'-2)}_{0,j} \right) \right]}_{\infty} \leq \epsilon'
\end{split}
\end{align}
Note that $\mu_{\pm} \left(\lambda_{sj}  \bm{x}^{(2l'-2)}_{0,j}\right) = \mu_{\pm} \left(\lambda_{sj}  \bm{x}^{(l'-1)}_{t,j}\right)$ if $|\bm{x}^{(2l'-2)}_{t,j}| > \epsilon'/(3 T N_t)$ anyways.
Otherwise, we would prune it to zero. 
The first term concerns the subset sum approximation error, the second one the approximation of the input neurons, while the third one originates in the approximation of the activation function in the first layer.
We achieve our approximation objective if we bound each of these errors by $\epsilon'/3$
Note that the latter vanishes for ReLUs, Leaky ReLUs, or linear activation functions because our approximation would actually be exact.

Let us first bound the subset sum approximation error. 
Recall that we have assumed that $|x^{(l'-1)}_{t,jq}| \leq 1$. 
We can partition the sum over the indices $s$ to focus on the same input so that we can utilize the linearity of convolutions to derive
\begin{align}
\begin{split}
 & T \max_i \sum_j \norm{\bm{W}^{(l')}_{t,ij} * \bm{x}^{(l'-1)}_{t,j} + b^{(l')}_{t,i}  - \sum^{c_{1/2}}_{s=1} \bm{W}^{(2l')}_{\epsilon',is} * \left(\mu_{\pm} \left(\lambda_{sj}  \bm{x}^{(l'-1)}_{t,j} \right) \lambda_{sj} \right) }_{\infty} \\
& = T \max_i \sum_j \sum_q \Bigg( \Bigg| r \mu_{\pm}\left(x^{(l'-1)}_{t,jq}\right) \left(\bm{W}^{(l')}_{t,ijq} -  1/r \sum_{s \in I^{+}_{ijq}}  w^{(2l')}_{\epsilon',isq} \lambda_{sj} \right) \Bigg| + \left| r \mu_{\pm}\left(-x^{(l'-1)}_{t,jq}\right) \left(\bm{W}^{(l')}_{t,ijq} -  1/r \sum_{s \in I^{-}_{ijq}} w^{(2l')}_{\epsilon',isq} \lambda_{sj} \right) \right|\\
& +  \left|  b^{(l')}_{t,i} - m_+ \sum_{s \in I_{ib}} w^{(2l')}_{0,is} \lambda_{s (c_{l'-1}+1)}   \right| \Bigg) 
\leq T N_t \epsilon''' \leq \epsilon'/3,
\end{split}
\end{align}
where we have used Eqs.~(\ref{eq:sub}) \& (\ref{eq:subb}) and the fact that $r(\mu_{\pm}(x) + \mu_{\pm}(-x)) = 1$.
Defining $\epsilon''' = \epsilon'/(3 T N_t )$ derives our width requirement so that we can solve the associated subset sum approximation problems.
How should we choose $\delta'''$? 
In total, we have to solve less than $2 N_t$ problems.
If each is successfully solved with probability $1- \delta''' = 1-\delta'/(2 N_t)$, we can see with the help of a union bounds that we can solve all of them with probability $(1-\delta')$. 
For every input neuron, we need to prune a large enough base set with $||I_j| \geq C  \log\left(\frac{1}{\min\{\epsilon''', \delta'''\}}\right)$ resulting in a width requirement of $c_{1/2} \geq C c_{l'-1} \log\left(\frac{N_t}{ \min\{\epsilon'/(3T), \delta'/2 \}} \right)$, as was to be shown.

Second, we have to show that we can bound the approximation error of the input neurons in Eq.~(\ref{eq:all2l}).
With the help of the previous approximation and recalling that $|w_{t,ijq}| \leq 1-\epsilon'''$ so that $|w_{\epsilon',ijq}| \leq 1$, we see that 
\begin{align}\label{eq:inputerror}
\begin{split}
&  T \max_i \sum_j \norm{\sum^{c_{1/2}}_{s=1} \bm{W}^{(2l')}_{\epsilon',is} * \left(\mu_{\pm} \left(\lambda_{sj}  \bm{x}^{(2l'-2)}_{0,j}\right)  \lambda_{sj} \left[ \bm{x}^{(2l'-2)}_{0,j} - \bm{x}^{(l'-1)}_{t,j} \right] \right) }_{\infty} \\
& \leq T N_t  \max_j \max_q  | x^{(l'-1)}_{t,jq} - x^{(2l'-2)}_{0,jq} | \leq \epsilon'/3
\end{split}
\end{align}
according to our assumption on $\max_{j,q} | x^{(l'-1)}_{t,jq} - x^{(2l'-2)}_{0,jq} |$. 

Third, let us bound the activation function approximation error in Eq.~(\ref{eq:all2l}), which we control by the initialization constant $\sigma > 0$, which we can make arbitrarily small. 
\begin{align}
\begin{split}
& T \max_i \sum_j \norm{\sum^{c_{1/2}}_{s=1} \bm{W}^{(2l')}_{\epsilon',is} * \left[\mu_{\pm} \left(\lambda_{sj}  \bm{x}^{(2l'-2)}_{0,j} \right) \lambda_{sj}  \bm{x}^{(2l'-2)}_{0,j}  - \phi\left(\lambda_{sj}  \bm{x}^{(2l'-2)}_{0,j} \right) \right]}_{\infty}\\
& \leq T \sum_j \sum_q \sum_s |w^{(2l')}_{\epsilon',isq}| \epsilon'' \leq T N_t C \log\left(\frac{N_t}{\min\{\epsilon'/(3T), \delta'/2\}}\right) (r/\sigma) \epsilon''
\end{split}
\end{align}
We have to choose $\sigma = a(\epsilon'')/2$ small enough so that 
\begin{align}
  \epsilon'' \leq \frac{\epsilon'}{3 T N_t C \log\left(\frac{N_t}{\min\{\epsilon'/(3T), \delta'/2\}}\right) \frac{r}{\sigma} } = \frac{\epsilon'}{3 T N_t C \log\left(\frac{N_t}{\min\{\epsilon'/(3T), \delta'/2\}}\right) \frac{r}{2 a(\epsilon'')} }.
\end{align}
Note that we can find an appropriate $\epsilon''$ because the function $g(\epsilon'') = \frac{\epsilon''}{a(\epsilon'')}$ is invertible on a suitable interval $]0, \epsilon']$.
The reason is that $a(\epsilon'')$ is continuous and monotonically increasing in $\epsilon''$.
We can therefore define 
\begin{align}
    \epsilon'' = g^{-1} \left(\frac{\epsilon'}{3 T N_t C \log\left(\frac{N_t}{\min\{\epsilon'/(3T), \delta'/2\}}\right) \frac{r}{2} } \right).
\end{align}
With this we can conclude that the LT approximates the target network up to error $\epsilon'$. 
\end{proof}
Interestingly, note that the activation function approximation does not directly impact our width requirement.
It relies, however, on a suitable parameter initialization approach.

\paragraph{($2L_t$)-construction}
Stacking $L_t$ layers together, we can prove our LT existence theorem for the ($2L_t$)-construction.
\begin{theorem*}[LT existence ($2L_t$)-construction)]%
Assume that ${\epsilon, \delta \in {(0,1)}}$, a convolutional target network (without skip connections) $f_t(x): \mathcal{D} \subset \mathbb{R}^{c_0 \times d_0} \rightarrow \mathbb{R}^{c_L \times d_{L_t}}$ with architecture $\bar{c}_t$ of depth $L_t$ with $N_{t,l}$ nonzero parameters in Layer $l$, and a source network $f_0$ with architecture $\bar{n}_0$ of depth $L_0 = 2 L_t$ are given.
Let $\phi$ be the activation function of $f_t$ with Lipschitz constant $T$ fulfilling Assumption~\ref{def:act} with $d=0$. 
Then, with probability at least $1-\delta$, $f_{0}$ contains a subnetwork $f_{\epsilon} \subset f_{0}$ so that each output component $i$ is approximated as $\max_{\bm{x}\in \mathcal{D}} \left|f_{t,iq}(\bm{x})- f_{\epsilon',iq}(\bm{x}) \right| \leq \epsilon$ %
if for all $l' \in [L_t]$
\begin{align*}
  c_{0,2l'+1} \geq  C c_{t,l} \log\left(\frac{N_t }{\min\{\epsilon/\prod^{L_t}_{s=l}(3TN_{t,s}), \delta \} }\right), %
\end{align*}
and $n_{0,2l'} \geq n_{t,l'}+1$, and if the parameters of $f_{0}$ are initialized according to Assumption~\ref{def:initconv} with $\sigma_{2l'+1} = r/\sigma_{2l'} $ and $\sigma_{2l'} = a(\epsilon'')/2$ and $\epsilon'' = g^{-1} \left(\frac{\epsilon'}{C  N_t \log\left(\frac{N_t }{\min\{\epsilon/\prod^{L_t}_{s=l}(3TN_{t,s}), \delta \} }\right) } \right)$ for $g(\epsilon'') = \epsilon''/(a(\epsilon''))$.
\end{theorem*}
\begin{proof}
The proof is a repeated application of Corollary~\ref{cor:twice}.
Only the first layer approximation is special, as we might not have a constant tensor available among the data inputs to approximate the target biases of the first layer.
In this case, we need to modify our bias approximation by using the nonzero biases of $f_0$ in Layer $l=1$.
Instead of pruning univariate tensors in the first layer that take a constant tensor as input, the neurons that we reserve in Layer $1$ of the LT with indices $I_b$ receive completely zero weights but keep a bias term $b_{\epsilon,s} = b_{0,s}$.
The remaining steps of the proof for the first layer are identical to the proof of Corollary~\ref{cor:twice}.

It is only left to show how to adapt the $\epsilon'$ and $\delta'$ to account for multiple layer approximations.
$\delta' = \delta/N_t$ with $N_t = \sum_l N_{t,l}$ is sufficient to ensure that all subset sum approximations of all parameters are successful with probability $\delta$.
The adaptation of the error, however, needs to take into account how error propagates through different layers.
To approximate the output successfully, Corollary~\ref{cor:twice} requires that each input tensor element $x^{(L-1)}_{t,iq}$ can have an error of maximally $\epsilon/(3 T N_{t,L})$.
Repeating this argument inductively, results in an allowed error of $\epsilon_l = \frac{\epsilon}{(3T)^{L_t-l+1} \prod^{L_t}_{s=l} N_{t,s}}$ in the approximation of the target layer $l$. 
\end{proof}
Note that with stricter assumptions on the target parameters, we could also obtain a more favorable scaling of the error with the number of nonzero parameters.

\subsection{Proof of Thm.~\ref{thm:LTexistfull} ($L_t+1$)-construction)}

\begin{theorem*}[LT existence ($L_t+1$)-construction)]%
Assume that ${\epsilon, \delta \in {(0,1)}}$, a convolutional target network (possibly with skip connections) $f_t(x): \mathcal{D} \subset \mathbb{R}^{c_0 \times d_0} \rightarrow \mathbb{R}^{c_L \times d_{L_t}}$ with architecture $\bar{c}_t$ of depth $L_t$, and a source network $f_0$ with architecture $\bar{n}_0$ of depth $L_0 = L_t+1$ are given.
Let $\phi$ be the activation function of $f_t$ and $f_0$ with Lipschitz constant $T$. 
Let further $\phi_0$ be the activation function of $f_0$ in the first layer fulfilling Assumption~\ref{def:act} with $d=0$. 
Define the number $N_{l}$ of effective nonzero parameters in Layer $l$ as $N_{l} = N_{w,l} + N_{m,l}$.
Then, with probability at least $1-\delta$, $f_{0}$ contains a subnetwork $f_{\epsilon} \subset f_{0}$ so that each output component is approximated as $\max_{\bm{x}\in \mathcal{D}} \left|f_{t,iq}(\bm{x})- f_{\epsilon',iq}(\bm{x}) \right| \leq \epsilon$ %
if for all $l \in [L_t]$
\begin{align*}
  c_{0,l+1} \geq  C c_{t,l} \log\left(\frac{1 }{\min\{\epsilon_l, \rho \delta/N_l \} }\right), %
\end{align*}
and $n_{0, 1} \geq C c_{t,0} \log\left(\frac{1 }{\min\{\epsilon_1, \delta \rho \} }\right)$ with $\rho = C N^{1+\gamma}_l \log(1/\min\{ \min_l \epsilon_{l}, \delta \})$ for any $\gamma > 0$.
$\epsilon_l = \frac{\epsilon}{2 T N_{w,l} \prod^L_{s=l+1} (2 (T N_{w,s} + N_{m,s}))}$.
Additionally, we require that the parameters of $f_{0}$ are initialized according to Assumption~\ref{def:initconv} with $\sigma_l = 1$ for $l > 2$, $\sigma_{1} = r/\sigma_{2} $ and $\sigma_{2} = a(\epsilon'')/2$ and $\epsilon'' = g^{-1} \left(\frac{\epsilon'}{C  N_l \log\left(\frac{N_l }{\min\{\epsilon_l, \delta \} }\right) } \right)$ for $g(\epsilon'') = \epsilon''/(a(\epsilon''))$.
\end{theorem*}

\begin{proof}
In contrast to Thm.~\ref{thm:LTexist2L}, in the approximation of the target layers $l > 1$, we do not need to approximate the activation function locally as a Leaky ReLU.
Thus, we save the approximation error and the separate approximation of positive and negative parts.
Moreover, we can use smaller base sets to solve subset sum approximation problems because the random variables are distributed as $X_k \sim U[-1,1]$ instead of $X_k \sim U[0,1]  U[-1,1]$. 

Another advantage of this construction is that skip connections can be naturally integrated.
As a consequence, we also have to consider the error that we inflict by pruning or just representing skip connections.
Let us regard the error at Layer $l$ and denote with $\epsilon_l$ the maximal error of a parameter approximation.
Similar to before, we have 
\begin{align}
\begin{split}
     \norm{x^{(l)}_{t} -  x^{(l+1)}_{0} }_{\infty} & \leq T N_{w,l} \epsilon_l + T N_{w,l}  \norm{x^{(l-1)}_{t} -  x^{(l)}_{0} }_{\infty} + N_{m,l} \max_{s \leq l-1}  \norm{x^{(s)}_{t} -  x^{(s+1)}_{0} }_{\infty} \\
    & \leq  T N_{w,l} \epsilon_l + (TN_{w,l} + N_{m,l} ) \max_{s \leq l-1}  \norm{x^{(s)}_{t} -  x^{(s+1)}_{0} }_{\infty}. 
\end{split}
\end{align}
In fact, it also follows that 
\begin{align}
\begin{split}
   \max_{s \leq l}   \norm{x^{(s)}_{t} -  x^{(s+1)}_{0} }_{\infty} & \leq  T N_{w,l} \epsilon_l + (TN_{w,l} + N_{m,l} ) \max_{s \leq l-1}  \norm{x^{(s)}_{t} -  x^{(s+1)}_{0} }_{\infty}. 
\end{split}
\end{align}
The error of the last layer can therefore be bounded by $\epsilon$ if we ensure that $\epsilon_L = \epsilon/(2 T N_{w,l})$ and
$\max_{s \leq L-1}  \norm{x^{(s)}_{t} -  x^{(s+1)}_{0} }_{\infty} \leq \epsilon/(2(TN_{w,L} + N_{m,L}))$. 
We can thus derive the error by propagating it from Layer $l$ to $l-1$.
This leads to the definition $\epsilon_l = \frac{\epsilon}{2 T N_{w,l} \prod^L_{s=l+1} (2(TN_{w,L} + N_{m,L}))}$.
 
In addition, we need to investigate how many more subset sum approximations we have to solve in this construction.
The argument is very similar to the one for fully-connected networks \citep{depthexist}.
For completeness, we repeat it here.

$\delta$ is modified by $\rho \geq \rho' = \sum^L_{l=1} \rho'_l$, where $\rho'$ counts the increased number of required subset sum approximation problems to approximate the $L$ target layers with our lottery ticket and $\rho_l$ counts the same number just for Layer $l$. 

For each non-zero parameter, we will need two solve at least one subset sum approximation problem or sometimes two in case of the first target layer.
We denote the number of non-zero parameters in Layer $l$ as $N_{l}$. 
Thus, if our target network is dense without skip connections and all parameters are nonzero, we have $N_l= c_{t,l} c_{t,l-1} k_{t,l}$ and in total $N_t = \sum^L_{l=1} c_{t,l} (c_{t,l-1} k_{t,l} + 1)$.

Let us start with counting the number $\rho'_L$ of required subset sum approximation problems in the last layer because it determines how many neurons we need in the previous layer. 
This in turn defines how many subset sum approximation problems we have to solve to construct this previous layer.

The last layer requires us to solve exactly $\rho'_L = N_L$ subset sum problems, which can be solved successfully with high probability if $c_{0,L-1} \geq C c_{t,L-1} \log(1/\min\{\epsilon_L, \delta/\rho'\})$.
We will only need to construct a subset of these neurons with the help of Layer $L-2$, i.e., exactly the neurons that are used in the lottery ticket. 
If $c_{0,L-1}$ is large, this might require only $2-3$ neurons per parameter.
For simplicity, however, we bound this number by the total number of available channels.
To reconstruct one set of channels, we need approximate $N_{L-1}$ parameters.
As we have to maximally construct $C\log(1/\min\{\epsilon_L, \delta/\rho'\})$ sets of these channels, we can bound $\rho'_{L-1} \leq C N_{L-1} \log(1/\min\{\epsilon_L, \delta/\rho'\}$.

Note that we can solve all of these subset sum approximation problems with the help of $c_{t,L-2} \geq C N_{L-2} \log(1/\min\{\epsilon_{L-1}, \delta/\rho'\}$ of neurons and this number does not scale by the fact that we have to construct not only $c_{t,L-1}$ channels but a number that is increased by a logarithmic factor.
The higher number of required neuron approximations only affects the number of required subset sum approximation problems and thus the needed success probability of each parameter approximation via $\rho$.

Repeating the same argument for every layer, we derive $\rho'_{l} \leq C N_{l} \log(1/\min\{\epsilon_{l+1}, \delta/\rho'\}$, which could also be shown formally by induction.
In total we thus find $\rho' = \sum^L_{l=1} \rho'_l \leq C N_{l}\log(1/\min\{ \min_l \epsilon_{l}, \delta/\rho' \}) \leq C N_t \log(1/\min\{ \min_l \epsilon_{l}, \delta/\rho \})$.
A $\rho$ that fulfills $ \rho = C N_t \log(1/\min\{ \min_l \epsilon_{l}, \delta/\rho \})$ would therefore be sufficient to prove our claim.
$\rho = C N^{1+\gamma}_t \log(1/\min\{\epsilon, \delta \})$ for any $\gamma > 0$ works, as $C N^{\gamma}_t \geq \log(N_t)$.
\end{proof}

\section{Activation functions with $d \neq 0$}
As explained for fully-connected networks by \citep{depthexist}, our derivations also apply to activation functions with $\phi(0) = d \neq 0$ if we initialize the parameters in our source network with the 'looks-linear' initialization \citep{dyniso}.

\section{Strides}
The main objective of pruning the first layer in Corollary~\ref{cor:twice} is to create multiple versions of the input tensors.
This can be achieved by pruning 1-dimensional filters with stride 1.
If the stride is higher and the filter dimensions is big enough so that filter windows overlap, we can always prune the available filter down to an univariate one - with the given stride $s$.
Yet, with such a filter, if the stride is $s>1$, we will not create a complete version of an input filter.
If we multiply our width requirement by the stride $s$, we can still reconstruct it by adding partial input filter versions.
This can be achieved by pruning a filter in the source network at different positions.
For each additional position, we need another univariate filter, which explains our increased width requirement.

\end{document}